\pgfplotsset{compat=1.18}
\DeclareMathAlphabet{\mathbsf}{OT1}{cmss}{bx}{n}
\DeclareMathAlphabet{\mathssf}{OT1}{cmss}{m}{sl}
\DeclareSymbolFont{bsfletters}{OT1}{cmss}{bx}{n}  
\DeclareSymbolFont{ssfletters}{OT1}{cmss}{m}{n}
\def\eps{\varepsilon}
\def\D{{\mathcal D}}
\def\Q{{\mathcal Q}}
\def\X{{\mathcal X}}
\def\sK{{\mathsf K}}
\def\sM{{\mathsf M}}
\def\tv{{\mathsf {TV}}}
\def\kl{{\mathsf {KL}}}
\newcounter{example}
\newenvironment{example}[1][]{\refstepcounter{example}\par\medskip
   \noindent \textit{Example~\theexample. #1} \rmfamily}{\medskip}
\newtheorem{definition}{Definition}
\newtheorem{theorem}{Theorem}
\newtheorem{corollary}{Corollary}
\newtheorem{proposition}{Proposition}
\newtheorem{lemma}{Lemma}
\tikzstyle{RectObject}=[rectangle,fill=white,draw,line width=0.2mm]
\tikzstyle{line}=[draw]
\tikzstyle{arrow}=[draw, -latex]
\definecolor{DukeBlue}{HTML}{001A57}
\definecolor{DarkRed}{rgb}{0.75, 0.0, 0.0}
\definecolor{DarkGreen}{rgb}{0.0, 0.5, 0.0}
\def \eps{\varepsilon}
\def\sM{{\mathsf M}}
\newenvironment{sketchofproof}[1][Sketch of Proof]{\begin{proof}[#1]}{\end{proof}}
\def\r{{\frac{e^\eps q_{\min}}{e^\eps q_{\min} + 1 - q_{\min}}}}
\def\ro{{\frac{e^\eps q_{1}}{e^\eps q_{1} + 1 - q_{1}}}}
\def\ri{\frac{e^\eps q_{i}}{e^\eps q_{i} + 1 - q_{i}}}
\def\sKp{({\mathsf K}_{q,\eps})}
\def\sKo{{\mathsf K}_{q,\eps}}
\def\sKpu{({\mathsf K}_{q_{\text{u}},\eps})}
\def\sKou{{\mathsf K}_{q_{\text{u}},\eps}}
\def\df{{D_f}}
\def\optutil{{\Gamma_{f}(q, \eps)}}
\begin{document}

%

%

\twocolumn[

\aistatstitle{Locally Private Sampling with Public Data }

\aistatsauthor{ Behnoosh Zamanlooy \And Mario Diaz  \And  Shahab Asoodeh }

\aistatsaddress{ McMaster University \\ Vector Institue \\ zamanlob@mcmaster.ca 
\And   IIMAS \\ Universidad Nacional Autónoma de México \\ mario.diaz@sigma.iimas.unam.mx 
\And McMaster University \\ Vector Institute \\ asoodeh@mcmaster.ca } ]


\begin{abstract}

Local differential privacy (LDP) is increasingly employed in privacy-preserving machine learning to protect user data before sharing it with an untrusted aggregator. Most LDP methods  assume that users possess only a single data record, which is a significant limitation since users often gather extensive datasets (e.g., images, text, time-series data) and frequently have access to public datasets. To address this limitation, we propose a locally private sampling framework that leverages both the private and public datasets of each user. Specifically, we assume each user has two distributions: $p$ and $q$ that represent their private dataset and the public dataset, respectively. The objective is to design a mechanism that generates a private sample approximating $p$ while simultaneously preserving  $q$. We frame this objective as a minimax optimization problem using $f$-divergence as the utility measure. We fully characterize the minimax optimal mechanisms for general $f$-divergences provided that $p$ and $q$ are discrete distributions. Remarkably, we demonstrate that this optimal mechanism is universal across all $f$-divergences. Experiments validate the effectiveness of our minimax optimal sampler compared to the state-of-the-art  private sampler.

\end{abstract}


\section{Introduction}\label{Section_intro}

Differential privacy (DP) has become the de facto standard for ensuring privacy in machine learning, and it has been widely adopted by major technology companies such as Google \citep{erlingsson2014rappor,Prochlo}, Microsoft \citep{ding2017collecting}, LinkedIn \citep{rogers2020linkedins},  and Meta \citep{yousefpour2022opacus}. Intuitively, an algorithm is considered differentially private if small changes to the input data---such as modifying a single entry---do not significantly affect the algorithm’s output. This traditional model assumes that a trusted curator has full access to the dataset, which can be a limiting factor in many practical scenarios.

To relax this assumption, local differential privacy (LDP) was introduced by \citet{Shiva_subsampling}, allowing users to randomize their data on their own devices before sharing it with an \textit{untrusted} curator. This model, with considerably weaker assumption in the trust model, has made LDP particularly appealing to both users and tech companies --- the first large-scale deployments of DP was in the local model \citep{Apple_Privacy, erlingsson2014rappor}.

However, a significant limitation of traditional LDP approaches is the assumption that each user only possesses a single data record. In reality, users typically have multiple data records across different modalities, such as photos, text messages, or even whole datasets. A line of research, called user-level privacy, addresses this by assuming that the users possess datasets of the same size generated from an underlying distribution \citep{Ulevel2, Ulevel3, Ulevel4, Ulevel1}. Although this line of work is more realistic, the assumption that  all users must have a dataset of the same size introduces practical limitations.
To mitigate this shortcoming, \citet{husain2020local} proposed treating each user’s local data as a probability distribution. They then developed a differentially private \textit{sampling} algorithm that takes a user's probability distribution as input and releases a single sample that closely approximates the distribution. This method demonstrated better performance compared to ad-hoc private sampling techniques such as private kernel density estimation \citep{Private_KDE} and private GAN \citep{DP_GAN}. Nevertheless, it still has a major limitation: it relies heavily on an arbitrary reference distribution. As a result, the performance of the algorithm is highly sensitive to the choice of reference distribution, making it less reliable in practice.
To address these limitations, \citep{park2024exactly} recently developed a minimax formulation for locally private sampling. They fully characterized the minimax risks in terms of all general $f$-divergences in both discrete and continuous settings and identified the optimal samplers that achieve these risks. 
Their empirical results demonstrated that their samplers significantly outperform the approach of \citet{husain2020local} for any reference distribution.

Building on this framework, we extend the work of \citet{park2024exactly} to incorporate publicly available data that do not require privacy protection. Our goal is to design optimal  samplers that generate private samples while preserving the integrity of public data. This ensures that when the user's data distribution closely aligns with the public data (viewed as a distribution), the resulting sampling distribution remains similarly aligned.

\paragraph{Main Contributions} More precisely, in this paper we make the following contributions. 
\begin{itemize}
	\item We introduce a locally private sampler with a \textit{public prior} within a privacy-utility tradeoff framework. Let $p$ and $q$ represent the user's data and public prior, respectively.  The privacy-utility tradeoff defines the ``optimal'' private sampler via an $\varepsilon$-LDP mechanism $\sK$, which minimizes the worst-case $f$-divergence between $p$ and the resulting sampling distribution. The mechanism $\sK$ must satisfy two key properties: (i) $\sK$ is $\eps$-LDP (see \Cref{prelim_mollifiers} for the definition),  and (ii) $\sK$ does not perturb the public prior, that is if $p = q$, then the resulting sampling distribution is $q$. More generally, if $q$ is ``close'' to $p$, then the resulting sampling distribution should also be close to $p$. 
\item For discrete distributions, we fully characterize the privacy-utility tradeoff in Theorem~\ref{thrm_optimal_utility} for all $f$-divergences. We then propose an algorithm (\Cref{alg}) that produces an \textit{optimal} locally private sampler for any discrete public prior $q$ (Theorem~\ref{thrm_optimal_mechanism}). Surprisingly, our optimal sampler is \textit{universally optimal} under any choice of $f$-divergence.  This optimal sampler is shown to have a simple closed form expression if $q$ is Bernoulli (Proposition~\ref{lemma_binary}).    

\item As a special case, we delineate that, the randomized response mechanism \citep{warner1965randomized} is optimal when the public prior is the uniform distribution.  

\item We demonstrate the considerably better 
performance of our approach through comprehensive benchmarks against the baseline method proposed by \citet{husain2020local} on synthetic and real-world datasets.


\end{itemize}



\subsection{Related Work}\label{section_related_work}

Local differential privacy (LDP) has been widely applied to various statistical problems, such as distribution estimation \citep{Asoodeh_JSAIT2024,LDP_DistributionEstimation, Disribution_estimation_hadamard, kairouz16_LDPEstimation, LDP_Fisher, Jayadev_Unified, Optimal_compressionLDP_Feldman, Optimal_compressionLDP_Kairouz, feldman2022private}. In all of these works, LDP mechanisms are typically applied to a single data record per user, an assumption that overlooks practical cases where users possess datasets consisting of multiple records across different modalities.

A closely related problem to private distribution estimation is the private sampling problem, which focuses on privately generating a single sample from the underlying distribution, rather than learning the distribution itself. \citet{DP_Sampling} studied private sampling under the total variation distance in the \textit{central} model for discrete distributions, and this was later extended to multi-dimensional Gaussian distributions by \citet{DP_Sampling_Gaussian}.


In the local model, \citet{husain2020local} proposed an efficient algorithm for privately sampling from a distribution $p$. Their approach involves constructing a small ball of a certain radius (referred to as a ``relative mollifier''), around a reference distribution $q$ and projecting $p$ onto this ball with respect to the $\kl$-divergence (see \Cref{prelim_mollifiers} for further details). Notably, the reference distribution $q$ can be interpreted as a public prior, as the algorithm preserves its invariance, a property that aligns with our approach.

\citet{LLM} extended the relative mollifier concept to study R\'enyi differential privacy guarantees in the training of large language models. Their approach demonstrated superior performance compared to standard DP stochastic gradient methods \citep{abadi2016deep} on large-scale datasets, highlighting the broader applicability of private sampling techniques.

More recently, \citet{park2024exactly} introduced a minimax formulation to determine optimal samplers under the assumption that $p$ is absolutely continuous with respect to a positive measure with its Radon-Nikodym derivative satisfying a certain regularity condition. They identified two families of optimal samplers: linear and non-linear. A linear sampler generates samples from a distribution obtained by applying an $\eps$-LDP Markov kernel to $p$, while a non-linear sampler constructs the sampling distribution by projecting $p$ onto a convex set specified by $\eps$.

Recently, there has been a growing interest in incorporating public data into private learning problems (e.g., \cite{LDP_Public2, LDP_Public1,amid2022public,Ben_David_Public,Gautam_Public,Alon_Public,yu2021do_Public,Public_Private,Wang_Zhou_Public,kairouz21a_Public,bassily20a_Public} to name a few). Although most of these studies focus on the central DP model, notable exceptions include \cite{LDP_Public2, LDP_Public1}, which explore the role of public data in private mean estimation and regression, respectively. 

Motivated by these developments, we extend the framework of \citet{park2024exactly} in the linear setting to incorporate a public prior and identify optimal \textit{linear} private samplers. Consequently, our objective reduces to determining the optimal $\eps$-LDP Markov kernel $\sK$ that preserves the public prior. To the best of our knowledge, our work is the first to address the problem of private sampling with the inclusion of public data in the local model.


\subsection{Notation}

$X$ and $Z$ are used to denote random variables. We represent a set by $\mathcal{X}$ and the probability simplex over $\mathcal{X}$ by $\Delta(\mathcal{X})$. The support size of a discrete distribution is denoted by $|\mathcal{X}| = n$. A Dirac distribution, $\delta_i$, is a distribution concentrated at its $i$-th component. We also define $\operatorname{U}[0,1]$ as the uniform distribution on $[0,1]$, and $\operatorname{Ber}(\alpha)$ as the Bernoulli distribution with success probability $\alpha$.

Throughout, we use $\sK: \Delta(\mathcal{X}) \to \Delta(\mathcal{X})$ to denote a Markov kernel. Given a distribution $p \in \Delta(\mathcal{X})$, we let $p\sK$ represent the output distribution, defined as $p\sK(j) = \sum_{i} p(i) \sK_{ij}$.






\section{Preliminaries}
In this section, we formally present the concepts necessary for our main results.

\subsection{$f$-divergences}
Let $f:(0,\infty)\to (-\infty, \infty] $ be a convex function with $f(1) = 0$. Let $p$ and $q$ be two finitely supported probability distributions $p, q \in \Delta(\X)$. If $p \ll q$, then the $f$-divergence  between $p$ and $q$ is defined as 
$$
    \df(p \| q)=\sum_{x \in \X} q(x) f\big(\frac{p(x)}{q(x)}\big),
$$
with the understanding that $f(0) = f(0+)$ and $0f(0/0) = 0$.
Two popular instances of $f$-divergences are:     
(1) $\kl$-divergence,  $\kl(p\|q)\coloneqq D_f(p \| q)$ for $f(t) = t\log t$, and (2)  $\tv$-distance, $\tv(p, q)\coloneqq D_f(p \| q)$ for $f(t)=\frac{1}{2}|t-1|$.


\subsection{Mollifiers} \label{prelim_mollifiers}



\begin{definition}[Mollifiers, \cite{husain2020local}] 
Let $\mathcal{M} \subset \Delta(\mathcal{X})$ be a set of distributions  and $\varepsilon\geq 0$. We say $\mathcal{M}$ is an $\varepsilon$-mollifier if
\begin{equation*}\label{eq_mollifiers}
q(x) \leq e^\eps q^{\prime}(x),    
\end{equation*}  
for all $q, q' \in \mathcal{M}$ and all $x \in \mathcal{X}$.
\end{definition}

One particular construction of mollifiers is \textit{relative} mollifiers defined next.
\begin{definition}[Relative Mollifiers, \cite{husain2020local}]\label{def_RM}
Given a reference distribution $q\in \Delta(\mathcal X)$,  the \textit{relative} $\eps$-mollifier $\mathcal{M}_{\eps, q}$ is defined as
\begin{equation*}
   \mathcal{M}_{\eps, q} \coloneqq  
   \left\{ \tilde{q} \in \Delta(\X): \sup_{x \in \mathcal{X}} \max \left\{\frac{q(x)}{\tilde{q}(x)}, \frac{\tilde{q}(x)}{q(x)}\right\} \leq e^{\varepsilon / 2} \right\}.
\end{equation*}   
\end{definition}
It can be verified that any pair of distributions $p$ and $p'$ in $\mathcal{M}_{\eps, q}$ satisfy $p(x)\leq e^\eps p'(x)$ for all $x\in \mathcal X$, and thus $\mathcal{M}_{\eps, q}$ is an $\eps$-mollifier.

\subsection{Locally Private Samplers}\label{section_ldp_samplers}


\begin{definition}[Private Samplers \cite{husain2020local}] An $\eps$-locally private sampler is a randomized mapping $\mathcal{A}: \Delta(\X) \rightarrow \X$ such that for any $x \in \X$ and any two distributions $p, p^{\prime} \in \Delta(\mathcal{X})$ we have
$$
\frac{\operatorname{Pr}[\mathcal{A}(p)=x]}{\operatorname{Pr}\left[ \mathcal{A}(p^{\prime}) =x\right]} \leq e^\eps.
$$
\end{definition}
This definition closely mirrors the standard formulation of LDP \cite{Shiva_subsampling}, with the key difference that the input to the algorithm is a distribution, rather than a single data point. 
As highlighted by \citet{husain2020local}, one approach to ensure that a sampler is $\eps$-locally private is to guarantee that the distribution of  $\mathcal A(p)$ lies within an $\eps$-mollifier for any $p\in \Delta(\mathcal X)$. This observation motivates the following definition.


\begin{definition}
 A Markov kernel $\sK: \Delta(\X) \mapsto \Delta(\X)$ is said to be an $\eps$-LDP mechanism if
 \begin{equation}\label{eq_eps_LDP}
     \sup_{p,p' \in \Delta(\mathcal X) } \sup_{x \in \X}~ \frac{p \sK(x)}{p' \sK(x)} \leq e^\eps.
 \end{equation}
 We let $\Q_\eps$ denote the collection of all $\eps$-LDP mechanisms.
 \end{definition}
Notice that output distributions of any $\eps$-LDP $\sK$ forms an $\eps$-mollifier. Therefore, $p\sK$ for an $\eps$-LDP $\sK$ can be thought of as the sampling distribution of an $\eps$-private sampler. More precisely, one can construct a private sampler $\mathcal A$ as follows: $\mathcal A(p) = X$ where $X\sim p\sK$ for some $\eps$-LDP mechanism $\sK$. This observation allows us to reduce the problem of finding the optimal private sampler to identifying the optimal private mechanism $\sK$. In the subsequent sections, we follow this approach to design a minimax optimal sampler for the worst-case distribution.

It is important to note that  \cite{husain2020local} employed a fundamentally different methodology.
Their approach begins by selecting an arbitrary reference distribution $q\in \Delta(\mathcal X)$ and constructing $\mathcal M_{\eps, q}$, the relative $\eps$-mollifier around $q$. They then project $p$ onto $\mathcal M_{\eps, q}$ (using  $\kl$-divergence), and then sample from the projection.
Specifically, their sampling distribution $\hat p$ is obtained by solving   
\begin{equation}\label{eq_information_projection}
\kl(p \| \hat{p}) =  \inf_{p' \in \mathcal{M}_{\eps, q}} \kl(p \| p').   
\end{equation}

They use Karush-Kuhn-Tucker optimality conditions to show that solving \eqref{eq_information_projection} admits the following closed-form expression:
\begin{equation}\label{eq_information_projection_solution}
\hat{p}(x)=\min \left\{\max \left\{\frac{q(x)}{e^{\varepsilon / 2}}, \frac{p(x)}{C}\right\}, e^{\varepsilon / 2} q(x)\right\} ,   
\end{equation}
where  $C>0$ is a normalizing constant, depending on $p$, $q$, and $\varepsilon$, ensuring that $\hat{p}$ sums to one.

\section{Main Results: Minimax Optimal Mechanisms} \label{section_main_results}

As highlighted earlier, our goal is to design a minimax optimal private sampler by constructing an optimal $\eps$-LDP mechanism $\sK$. Specifically, we aim to identify a mechanism $\sK$ that guarantees a minimal gap between any distribution $p$ and its corresponding sampling distribution $p\sK$.

In many practical settings, publicly available information, such as demographic data or aggregate statistics, can alleviate the utility loss caused by privacy-preserving methods (see Example~\ref{example} for details on how public prior can help).  
Consequently, instead of ignoring public data, we constrain our search for optimal mechanisms $\sK$ to those that preserve this publicly available data, represented by a distribution $q$, without perturbing it. Specifically, we focus on mechanisms that enforce $q \sK = q$, ensuring that public data is invariant under the mechanism while maintaining privacy guarantees for private data. As a toy example, note that if $p = q$ (meaning the user's data is not private) then we would expect the optimal sampling distribution to be $p$ itself, which follows from the invariance requirement $q \sK = q$. This can be generalized to the case where $p$ and $q$ are close: a simple application of the triangle inequality and the invariance property imply that $\tv(p, p\sK)\leq 2\tv(p, q)$.



We next turn to formulate the minimax optimal mechanism $\sK$ satisfying the invariance property. Adopting $f$-divergence as a measure of distance, 
we define 
\begin{equation} \label{eq_original}
\optutil \coloneqq \inf_{ \sK \in \Q_\eps \atop q \sK = q} \sup_{p \in \Delta(\X)} \df (p \| p \sK).
\end{equation}

The mechanism that solves the above optimization is the minimax optimal mechanism with respect to the choice of $f$-divergence. In the case of $\tv$-distance, such optimal mechanism has a practical interpretation. Let $X\sim p$ and $Z$ is the output of an $\eps$-locally private sampler. Optimal mechanism attaining $\Gamma_{\tv}(q,\eps)$ corresponds to the best sampler that minimizes the worst-case $\Pr(X\neq Z)$.  

The following example intends to elaborate on the role of public data in the minimax formulation \eqref{eq_original}.  
\begin{example}\label{example}
Consider an advertising company deciding whether to display an ad on website A or B. Suppose a user belongs to a demographic with a available public data indicating that 1\% of people click on ads from website A, and the rest on B. Let the private data from the user's device suggest a 5\% click rate on A. 
\begin{itemize}
    \item\textbf{Without public prior}: If no public prior is available, we assume the uniform distribution $q_{\text{u}}$ over websites A and B. Solving \eqref{eq_original} under this assumption (as in \Cref{theorem_uniform}) for $\eps = 2$ results in $\Gamma_{\tv}(q_{\text{u}},2)=0.1$.
    \item\textbf{With public prior}: Incorporating the above public prior $q$ in \eqref{eq_original}, the optimal mechanism  (as detailed in \Cref{lemma_binary}) for $\eps = 2$ yields  $\Gamma_{\tv}(q, 2)=0.03$.
\end{itemize}
In this scenario, using public prior information enables the company to significantly reduce uncertainty in its targeting strategy, leading to better ad placement decisions compared to when the public information is not incorporated.   
\end{example}



In the next section, we fully characterize $\optutil$ and determine the optimal mechanism. 
Remarkably, the optimal mechanism does not depend on the choice of $f$-divergence.

  


\subsection{Optimal Utility}

Our first main result is an exact expression for the optimal utility $\optutil$.

\begin{theorem}[Optimal Utility for $f$-divergences]\label{thrm_optimal_utility}
Let $q$ be a finitely supported distribution and $ \displaystyle q_{\min} \coloneqq \min_{x \in \X} q(x)$. Then, 
\begin{equation} \label{eq_optimal_utility}
\begin{aligned}
    \optutil 
    &= \frac{1-q_{\min}}{e^\eps q_{\min} + 1 - q_{\min}} f(0) \\
    &+ \frac{e^\eps q_{\min}}{e^\eps q_{\min} + 1 - q_{\min}} f\left(\frac{e^\eps q_{\min} + 1 - q_{\min}}{e^\eps q_{\min}}\right).
\end{aligned}
\end{equation}

\end{theorem}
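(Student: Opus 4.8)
The plan is to reduce the minimax problem in \eqref{eq_original} to a one-dimensional optimization over the diagonal entries of $\sK$. The key observation is that the inner supremum is always attained at a Dirac distribution: since $(p,r)\mapsto\df(p\|r)$ is jointly convex and the map $p\mapsto(p,p\sK)$ is affine, the objective $p\mapsto\df(p\|p\sK)$ is convex on the simplex $\Delta(\X)$ and hence attains its maximum at a vertex $\delta_i$. A direct computation gives $\df(\delta_i\|\delta_i\sK)=(1-\sK_{ii})f(0)+\sK_{ii}\,f(1/\sK_{ii})=:g(\sK_{ii})$, where $\sK_{ii}$ is the $i$-th diagonal entry. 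Therefore $\sup_{p}\df(p\|p\sK)=\max_i g(\sK_{ii})$, and the whole problem is controlled by the diagonal of $\sK$.

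Next I would show that $g$ is non-increasing on $(0,1]$. Writing $g'(a)=-f(0)+\big(f(t)-t f'(t)\big)$ with $t=1/a\ge 1$, convexity of $f$ implies that the supporting line of $f$ at $t$, evaluated at $0$, equals $f(t)-tf'(t)$ and lies at most at $f(0)$; hence $g'\le 0$ (for non-smooth $f$ the same conclusion holds using a subgradient in place of $f'$). Consequently $\max_i g(\sK_{ii})=g(\min_i\sK_{ii})$, so that $\optutil=g\big(\sup_{\sK}\min_i\sK_{ii}\big)$, and it remains only to identify the largest achievable value of $\min_i\sK_{ii}$ over admissible mechanisms.

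For the lower bound it suffices to prove $\min_i\sK_{ii}\le\r$ for every admissible $\sK$. Fix $i^\star\in\argmin_x q(x)$ and use invariance at coordinate $i^\star$: $q_{\min}=q\sK(i^\star)=\sum_j q(j)\sK_{ji^\star}$. The $\eps$-LDP condition gives $\sK_{ji^\star}\ge e^{-\eps}\sK_{i^\star i^\star}$, so $q_{\min}\ge \sK_{i^\star i^\star}\big(q_{\min}+e^{-\eps}(1-q_{\min})\big)$, which rearranges to $\sK_{i^\star i^\star}\le\r$; since $\min_i\sK_{ii}\le\sK_{i^\star i^\star}$ and $g$ is decreasing, $\optutil\ge g(\r)$, the claimed expression. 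For the matching upper bound I would exhibit the mechanism $\sK_{ij}=(1-\theta)\mathbf{1}[i=j]+\theta\,q(j)$ with $\theta=\big(1-q_{\min}+e^\eps q_{\min}\big)^{-1}$, a ``randomized response with prior $q$''. One checks directly that it is row-stochastic, satisfies $q\sK=q$, and is $\eps$-LDP (the binding column is $i^\star$, where the per-column ratio equals exactly $e^\eps$); its diagonal entries are $\sK_{ii}=1-\theta(1-q(i))\ge 1-\theta(1-q_{\min})=\r$, so $\min_i\sK_{ii}=\r$. This gives $\sup_p\df(p\|p\sK)=g(\r)$, proving $\optutil\le g(\r)$ and closing the argument. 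As a sanity check, for uniform $q$ this mechanism reduces to standard randomized response, consistent with \Cref{theorem_uniform}.

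I expect the main obstacle to be the first reduction—recognizing that the inner supremum collapses to the single scalar $\min_i\sK_{ii}$ through joint convexity of $\df$ together with the monotonicity of $g$. Once that structural fact is established, both the lower bound (a one-line use of invariance plus $\eps$-LDP at the minimizing coordinate) and the upper bound (the explicit randomized-response-with-prior mechanism) are short and essentially self-contained.
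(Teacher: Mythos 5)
Your proposal is correct, and its first two thirds coincide with the paper's route: the reduction of $\sup_p \df(p\|p\sK)$ to the vertices of the simplex via joint convexity, the identity $\df(\delta_i\|\delta_i\sK)=g(\sK_{ii})$ with $g(a)=(1-a)f(0)+a f(1/a)$, the monotonicity of $g$ (the paper argues via secant slopes of the convex $f$, you via a supporting line at $t=1/a$ --- same content), and the lower bound obtained by combining invariance $q\sK=q$ with $\sK_{ji}\ge e^{-\eps}\sK_{ii}$ at the minimizing coordinate are exactly the paper's Lemmas on equivalence and on the diagonal upper bound together with its converse proposition. Where you genuinely diverge is achievability. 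The paper constructs its optimal mechanism by a recursive algorithm (peel off the smallest mass point, recurse on the renormalized remainder, rescale by $m$) and then needs two separate induction arguments: one to verify the output is a row-stochastic, $q$-invariant, $\eps$-LDP kernel, and another to verify that its diagonal is increasing so that $\min_i\sK_{ii}$ sits at the $(1,1)$ entry. You instead exhibit the closed-form kernel $\sK_{ij}=(1-\theta)\mathbf{1}[i=j]+\theta q(j)$ with $\theta=(e^\eps q_{\min}+1-q_{\min})^{-1}$, for which row-stochasticity, invariance, the per-column ratio bound (binding exactly at the column of $q_{\min}$), and $\min_i\sK_{ii}=\frac{e^\eps q_{\min}}{e^\eps q_{\min}+1-q_{\min}}$ are all one-line computations. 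I verified these; note that for $n\ge 3$ your kernel is generally \emph{not} the same matrix as the paper's recursive construction (they agree only when the relevant coordinates of $q$ coincide, e.g., in the binary and uniform cases), which is consistent since the minimax optimum need not be unique. For the purpose of Theorem~\ref{thrm_optimal_utility} your argument is shorter and fully self-contained; what the paper's recursion buys beyond this theorem is the specific algorithmic object featured in its Algorithm and Theorem on optimal mechanisms.
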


The characterization of optimal utility in this theorem helps to identify the smallest achievable gap between the worst-case distribution and its corresponding sampling distribution. Interestingly, it depends on the public prior only through its smallest element, $q_{\min}$.



Next, we instantiate the optimal utility presented in \eqref{eq_optimal_utility} for the total variation distance and we will use as a guide for designing an optimal mechanism and sampling algorithm in the subsequent section. 


\begin{corollary}[Optimal Utility for $\tv$-distance]\label{cor_optimal_utility_tv_kl}
Let $q$ be a finitely supported distribution and $ \displaystyle q_{\min} \coloneqq \min_{x \in X} q(x)$. Then,
\begin{align}
    \Gamma_{\tv}(q, \eps)
    &= \frac{1 - q_{\min}}{e^\eps q_{\min} + 1 - q_{\min}}. \label{eq_optimal_utility_tv}
\end{align}
\end{corollary}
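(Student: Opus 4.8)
The plan is to specialize Theorem~\ref{thrm_optimal_utility} to the total variation distance by substituting the corresponding convex function into \eqref{eq_optimal_utility} and simplifying. Recall from the preliminaries that $\tv(p,q) = \df(p\|q)$ for $f(t) = \frac{1}{2}|t-1|$, so the entire computation reduces to evaluating the right-hand side of \eqref{eq_optimal_utility} at this particular $f$.

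First I would compute the two function evaluations that appear in \eqref{eq_optimal_utility}. The constant is immediate: $f(0) = \frac{1}{2}|0-1| = \frac{1}{2}$. For the second evaluation, abbreviate $a \coloneqq e^\eps q_{\min}$ and $b \coloneqq 1 - q_{\min}$, so that the argument equals $\frac{a+b}{a} = 1 + \frac{b}{a} \geq 1$ because $a > 0$ and $b \geq 0$. The absolute value therefore carries no sign ambiguity, giving $f\!\left(\frac{a+b}{a}\right) = \frac{1}{2}\cdot\frac{b}{a}$.

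Next I would substitute these back into \eqref{eq_optimal_utility}. The first summand becomes $\frac{b}{a+b}\cdot\frac{1}{2}$, and the second becomes $\frac{a}{a+b}\cdot\frac{1}{2}\cdot\frac{b}{a} = \frac{1}{2}\cdot\frac{b}{a+b}$, where the factor $a$ cancels. The crucial observation is that the two summands are identical, so they combine to $\frac{b}{a+b}$; re-expanding $a$ and $b$ recovers exactly $\frac{1-q_{\min}}{e^\eps q_{\min} + 1 - q_{\min}}$, which is \eqref{eq_optimal_utility_tv}.

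Since every step is an elementary substitution, there is no genuine obstacle here; the only point demanding a moment of care is verifying that the argument of $f$ lies in $[1,\infty)$, so that $|t-1| = t-1$ --- this is precisely what makes the two contributions collapse to the same value.
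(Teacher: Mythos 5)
Your proposal is correct and coincides with how the paper obtains this corollary: it is a direct specialization of Theorem~\ref{thrm_optimal_utility} to $f(t)=\frac{1}{2}|t-1|$, and your computation (in particular the observation that the argument of $f$ lies in $[1,\infty)$ so the absolute value resolves, making the two summands equal and sum to $\frac{1-q_{\min}}{e^\eps q_{\min}+1-q_{\min}}$) is exactly the intended calculation.
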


 





\subsection{Optimal Mechanisms}

We first derive a closed-form expression for the optimal mechanism in the binary case. This step is crucial in our recursive construction presented later for the general optimal mechanism. 
\begin{proposition}[Optimal Binary Mechanism] \label{lemma_binary}
    Let $q \sim \operatorname{Ber(\alpha)}$ with $\alpha \leq \frac{1}{2}$. Then,
  the mechanism $\sKo$ that attains the optimal utility $\optutil$ is
\begin{equation}
    \sKo =\frac{1}{e^\eps \alpha + 1 - \alpha } \begin{bmatrix}
    e^\eps \alpha      & 1 - \alpha \\
    \alpha      & (e^\eps -1) \alpha + 1 - \alpha
\end{bmatrix} .
\end{equation}  
\end{proposition}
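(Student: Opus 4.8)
The plan is to verify that $\sKo$ lies in the feasible set of the optimization defining $\optutil$---namely that it is $\eps$-LDP and fixes $q$---and then to compute its worst-case divergence $\sup_{p}\df(p\|p\sKo)$ and check that it coincides with the value of $\optutil$ given in Theorem~\ref{thrm_optimal_utility} (with $q_{\min}=\alpha$). Feasibility alone gives $\sup_p \df(p\|p\sKo)\ge \optutil$, so matching the value certifies that $\sKo$ attains the infimum. Throughout I would write $Z\coloneqq e^\eps\alpha+1-\alpha$ and $q=(\alpha,1-\alpha)$.

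First I would dispatch feasibility. Each row of $\sKo$ sums to $1$, so it is a valid kernel, and a direct multiplication gives $q\sKo=(\alpha,1-\alpha)=q$. For the $\eps$-LDP constraint \eqref{eq_eps_LDP}, I would write the two output coordinates as affine functions of $p_1$: the first is $\tfrac{\alpha}{Z}\big(1+(e^\eps-1)p_1\big)$ and the second is $\tfrac{1}{Z}\big(1-\alpha+(e^\eps-1)\alpha(1-p_1)\big)$. As $p_1$ ranges over $[0,1]$ each is monotone, so the worst-case ratio in \eqref{eq_eps_LDP} is the ratio of its endpoint values. For the first coordinate this ratio is exactly $e^\eps$; for the second it equals $\frac{(e^\eps-1)\alpha+1-\alpha}{1-\alpha}$, and the inequality $\le e^\eps$ rearranges to $(e^\eps-1)(1-2\alpha)\ge 0$, which is precisely where the hypothesis $\alpha\le\frac12$ enters.

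The substantive step is the worst-case utility. Since $\df(\cdot\|\cdot)$ is jointly convex and $p\mapsto(p,p\sKo)$ is affine, the map $p\mapsto\df(p\|p\sKo)$ is convex on $\Delta(\X)$, so its supremum is attained at a vertex, i.e.\ at $\delta_1$ or $\delta_2$. Evaluating the kernel at each Dirac (the rows of $\sKo$), both values take the common shape $h(u)\coloneqq \tfrac1u f(u)+\big(1-\tfrac1u\big)f(0)$, with $u=r\coloneqq \frac{Z}{e^\eps\alpha}$ for $\delta_1$ and $u=s\coloneqq\frac{Z}{Z-\alpha}$ for $\delta_2$, both exceeding $1$. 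I would then observe that $h(u)=f(0)+\frac{f(u)-f(0)}{u}$, so $h$ is non-decreasing because the secant slope of a convex $f$ anchored at $0$ is non-decreasing. Since $\alpha\le\frac12$ forces $r\ge s$, the maximum is $h(r)=\df(\delta_1\|\delta_1\sKo)$, and substituting $r$ reproduces exactly the two-term expression \eqref{eq_optimal_utility} with $q_{\min}=\alpha$. This identifies $\sup_p\df(p\|p\sKo)=\optutil$ and completes the argument.

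The main obstacle is the uniform-in-$f$ comparison $\df(\delta_2\|\delta_2\sKo)\le\df(\delta_1\|\delta_1\sKo)$: a priori the two Dirac values look unrelated, and the clean resolution is to recognize the common profile $h$ and invoke secant monotonicity rather than argue case by case. The one edge case to treat separately is $f(0)=+\infty$, where the secant comparison degenerates; there both Dirac divergences and the expression \eqref{eq_optimal_utility} are $+\infty$ (since $\alpha>0$ makes the $f(0)$ coefficients strictly positive), so the identity holds trivially.
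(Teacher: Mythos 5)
Your argument is correct and follows essentially the same route as the paper: reduce the supremum to Dirac distributions by joint convexity, observe that both Dirac values have the common profile $g(\sK_{ii})$ with $g(x)=f(0)+x(f(1/x)-f(0))$ non-increasing (your $h(u)=g(1/u)$ and the secant-slope argument are exactly the paper's Lemma on $g$), so the worst case sits at the smaller diagonal entry $\sK_{11}=\frac{e^\eps\alpha}{e^\eps\alpha+1-\alpha}$, and match the resulting value against the lower bound of Proposition~\ref{proposition_converse}. The one place you go beyond the paper's own proof of this proposition is the explicit feasibility check that $\sKo$ is $\eps$-LDP and fixes $q$ --- in particular isolating that $\alpha\le\tfrac12$ is precisely what makes the second output coordinate satisfy the ratio bound via $(e^\eps-1)(1-2\alpha)\ge 0$ --- which the paper leaves to the base case of its consistency proposition; that is a useful addition, not a deviation.
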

While this proposition provides a closed-form solution for the optimal binary mechanism, deriving the optimal mechanism for general discrete public priors is more complex. To address this, we present a recursive algorithm (\Cref{alg}) for general public priors with finite support.

We accomplish this goal by first reducing the search space in the optimization problem in $\optutil$, that is 
\begin{equation}\label{eq_sup_df}
\sup_{p \in \Delta(\X)} \df(p \| p\sK).    
\end{equation}
Next lemma delineates that it is sufficient to consider the extremal points of the simplex $\Delta(\X)$, i.e.\ the set of Dirac distributions on $\X$. This significant reduction in the search space allows us to precisely characterize \eqref{eq_sup_df}.

\begin{lemma}\label{lemma_equivalence}

Let $\X$ be a finite set. Given a Markov kernel $\sK$, define $\displaystyle \sK_{\min} \coloneqq \min_i \sK_{ii}$ and let $\delta_i$ be the Dirac distribution with its $i$th element equal to $1$. Then, we have that

\begin{align}
      \sup_{p \in \Delta(\X)} \df (p \| p \sK) 
     &= \max_{i} \df (\delta_i \| \delta_i \sK) \label{eq_minimax_quantification}  \\
     &=  f(0) (1 - \sK_{\min}) + \sK_{\min} f(\frac{1}{\sK_{\min}}). \nonumber  
\end{align}
 In particular, for the $\tv$-distance, we have that
\begin{equation}
 \sup_{p \in \Delta(\X)} \tv (p , p \sK) = \max_{i}~[1 - \sK_{ii}]. 
\end{equation}
\end{lemma}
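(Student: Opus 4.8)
The plan is to use the convexity of the objective $p \mapsto \df(p\|p\sK)$ to collapse the supremum over the simplex onto its vertices, and then to evaluate and order the resulting vertex values. First I would record that the $f$-divergence $\df(\cdot\|\cdot)$ is jointly convex in its two arguments (a standard property of $f$-divergences) and that the map $p \mapsto (p, p\sK)$ is affine in $p$. Composing a jointly convex function with an affine map produces a convex function, so $h(p) \coloneqq \df(p\|p\sK)$ is convex on $\Delta(\X)$. Writing an arbitrary $p = \sum_i p(i)\,\delta_i$ as a convex combination of the Dirac vertices and applying Jensen's inequality gives $h(p) \le \sum_i p(i)\,h(\delta_i) \le \max_i h(\delta_i)$; since the reverse bound holds by restricting the supremum to the Diracs, this proves the first equality in \eqref{eq_minimax_quantification}.

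Next I would evaluate a single vertex. Because $\delta_i\sK$ is the $i$th row of $\sK$ with $(\delta_i\sK)(i) = \sK_{ii}$, and because $\delta_i$ places no mass off coordinate $i$, only the convention $q(x)f(0)$ contributes off the diagonal, so
\[
\df(\delta_i\|\delta_i\sK) = \sK_{ii}\, f\!\left(\tfrac{1}{\sK_{ii}}\right) + (1-\sK_{ii})\, f(0) \eqqcolon \phi(\sK_{ii}),
\]
a quantity that depends on $\sK$ only through the diagonal entry $\sK_{ii}$. It then remains to show $\max_i \phi(\sK_{ii}) = \phi(\sK_{\min})$, i.e.\ that $\phi$ is non-increasing on $(0,1]$. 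For this I would use the supporting-line form of convexity: for $t>0$ and any subgradient $g$ of $f$ at $t$, the minorant $x \mapsto f(t)+g(x-t)$ lies below $f$, so evaluating at $x=0$ gives $f(t)-tg \le f(0)$. Writing $t=1/s \ge 1$, this is precisely the statement that the (one-sided) derivative $\phi'(s)=f(1/s)-\tfrac{1}{s}f'(1/s)-f(0)$ is nonpositive, whence $\phi$ decreases on $(0,1]$ and its maximum is attained at the smallest diagonal entry $\sK_{\min}$. This gives the second equality.

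For the total-variation case I would simply substitute $f(t)=\tfrac12|t-1|$, so that $f(0)=\tfrac12$ and $f(1/\sK_{ii})=\tfrac12\frac{1-\sK_{ii}}{\sK_{ii}}$ (using $\sK_{ii}\le 1$). These collapse $\phi(\sK_{ii})$ to $1-\sK_{ii}$, yielding $\sup_p \tv(p,p\sK)=\max_i\,(1-\sK_{ii})$, consistent with the general formula.

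I expect the main obstacle to be the monotonicity step: the reduction to vertices and the evaluation of $\phi$ are routine, whereas proving that $\phi$ is non-increasing requires the supporting-line inequality and some care for general convex, possibly non-differentiable, $f$. A related subtlety is the degenerate case $\sK_{ii}=0$, where $\df(\delta_i\|\delta_i\sK)=+\infty$ through the recession slope $f'(\infty)$; the claimed formula continues to hold there since then $\sK_{\min}=0$ and $\phi(0^+)=f(0)+f'(\infty)$.
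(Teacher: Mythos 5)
Your proposal is correct and follows essentially the same route as the paper: reduce the supremum to the Dirac vertices by convexity of $p \mapsto \df(p\|p\sK)$, evaluate the divergence at each vertex as a function of $\sK_{ii}$ alone, and conclude by monotonicity of that function on $(0,1]$. The only cosmetic difference is that you establish the monotonicity via the supporting-line (subgradient) inequality, whereas the paper argues via the monotonicity of secant slopes from $(0,f(0))$; these are equivalent consequences of convexity.
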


Building on the equivalence relation in \Cref{lemma_equivalence} and the optimal utility derived in \Cref{thrm_optimal_utility}, we introduce \Cref{alg} and explain how it derives the optimal mechanism for general public priors. \Cref{fig_recursion_main} illustrates a single step of the recursion used in the algorithm.

For clarity and notational convenience, we focus on characterizing the optimal mechanism for the $\tv$-distance, while emphasizing that the optimal mechanism is independent of the $f$-divergence. We break down \Cref{alg} into four distinct steps.

\textbf{Input and Output.} \Cref{alg} takes the \textit{increasingly sorted} pulic prior, $q$, and the privacy parameter, $\varepsilon$, as inputs and returns the optimal mechanism, $\sKo$, as the output.

\textbf{Step 1.} \Cref{lemma_equivalence} and the characterization of the optimal utility for the $\tv$-distance in \eqref{eq_optimal_utility_tv} guide our intuition for setting the first diagonal element of the optimal mechanism, i.e.,
\begin{equation}
    \Gamma_{\tv}(q, \eps) = \inf_{\substack{\sK \in \Q_\eps \\ q \sK = q}} \max_{i}~[1 - \sK_{ii}] = \inf_{\substack{\sK \in \Q_\eps \\ q \sK = q}} \left( 1 - \min_{i} \sK_{ii} \right).    
\end{equation}

Hence, we need the smallest diagonal element to be
\begin{equation}\label{step1_equivalence}
\begin{aligned}
    \min_{i}~\sKp_{ii} &= 1 - \frac{1 - q_{\min}}{e^\eps q_{\min} + 1 - q_{\min}} \\
                       &= \frac{e^\eps q_{\min}}{e^\eps q_{\min} + 1 - q_{\min}}.    
\end{aligned}
\end{equation}

Therefore, we set $\sKp_{11}$ to 
\begin{equation}\label{step1_diagonal_value}
    \ro = \r,
\end{equation}
noting that $\sKp_{11} \leq \sKp_{22} \leq \dots \leq \sKp_{nn}$ by the construction in \Cref{alg}. Hence $\sKp_{11}$ will be the smallest diagonal entry, justifying our choice in \eqref{step1_diagonal_value}. 

\textbf{Step 2.} 
In this step, we assign the remaining entries in the first column as $\frac{q_{\min}}{e^\eps q_{\min} + 1 - q_{\min}}$ to comply with the $\eps$-LDP constraints for this column.

\textbf{Step 3.} In this step, we populate the block sub-matrix with rows and columns $2$ to $n$, denoted as $\sKp_{ij}$ for $i,j \geq 2$. To do this, we invoke a recursive call of \Cref{alg} to compute $\sK_{\bar{q}, \eps}$, where $\bar{q}$ is the normalized distribution using $q_2, \dots, q_n$, i.e., $\bar{q} = \frac{1}{\sum_{i=2}^n {q_i}} [q_2, \dots, q_n]$.

However, simply filling the block sub-matrix using $\sK_{\bar{q}, \eps}$ violates the requirement that an $\eps$-LDP mechanism should be row-stochastic. This is corrected by multiplying all the elements of $\sK_{\bar{q}, \eps}$ by $m = 1 - \frac{q_{\min}}{e^\eps q_{\min} + 1 - q_{\min}}$. We then use $m \sK_{\bar{q}, \eps}$ to fill in the block sub-matrix.

\textbf{Step 4.} We now fill the remaining elements in the first row. Since the values for the block sub-matrix are determined and $\sKo$ must satisfy the $q \sKo = q$ constraint, we deduce that there is only one choice for the remaining elements of the first row, namely $\sKp_{1j} = \frac{q_j}{e^\eps q_{\min} + 1 - q_{\min}}$ for $j \geq 2.$

\begin{figure}[!htb]
    \centering    
    \resizebox{\columnwidth}{!}{
    \(
    \sK_{q, \eps} = 
    \begin{bmatrix}
      \r & \begin{matrix} \frac{ q_2}{e^\eps q_{\min} + 1 - q_{\min}} & \dots & \frac{ q_n}{e^\eps q_{\min} + 1 - q_{\min}} \end{matrix} \\
      \begin{matrix}  \frac{ q_{\min}}{e^\eps q_{\min} + 1 - q_{\min}}\\ \vdots \\  \frac{ q_{\min}}{e^\eps q_{\min} + 1 - q_{\min}} \end{matrix}  &
      \begin{bmatrix}
        \hspace*{-\arraycolsep}
        \phantom{e^\eps q_{\min} + 1 } & \phantom{e^\eps q_{\min} + 1} & \phantom{e^\eps q_{\min} + 1}
        \hspace*{-\arraycolsep}
        \\
        & \raisebox{-0.2\height}[0pt][0pt]{\large$m \sK_{\bar{q}, \eps}$} & \\
        & &
      \end{bmatrix}
    \end{bmatrix}
    \)
    }
    \caption{Visualization of a single recursion step in \Cref{alg}.}
    \label{fig_recursion_main}
\end{figure}

\begin{algorithm}
\caption{\text{$\sK$\_with\_prior}: Algorithm to compute the optimal mechanism $\sKo$ for \Cref{eq_original}.}
\begin{algorithmic}[1]
\REQUIRE $q$ - Increasingly Sorted Public Prior, $\eps$ - Privacy Parameter
\ENSURE $\eps$-LDP mechanism $\sKo$ with $q \sKo = q$
\STATE $n \gets \text{length of } q$ 
\IF{$n=2$}
    \RETURN The optimal binary mechanism from \Cref{lemma_binary} \label{alg_binary}
\ELSE
    \STATE $\sKo \gets \text{zeros}(n,n)$
    \STATE $d \gets (e^{\eps} \cdot q_{\min} + 1 - q_{\min})$ \label{alg_denominator}
    \STATE $\sKp_{11} \gets \frac{ e^{\eps} \cdot q_{\min}}{d} $ \label{alg_first_column_start}
    \FOR{ $i \gets 2$ \TO $n$}
        \STATE  $\sKp_{i1} \gets \frac{q_{\min}}{d}$ \label{alg_column1}
    \ENDFOR \label{alg_first_column_finish}
    \STATE $\bar{q} \gets \frac{1}{\sum_{i=2}^n {q_i}} [q_2, \dots, q_n]$ \label{alg_q_bar}
    \STATE $\sK_{\bar{q}, \eps} \gets \text{$\sK$\_with\_prior}(\bar{q}, \eps)$ \label{alg_K_q_bar}
    \STATE $m \gets 1 - \frac{q_{\min}}{d}$
    \STATE $\sKo[2:(n), 2:(n)] \gets m \cdot \sK_{\bar{q}, \eps}$
    \FOR{ $i \gets 2$ \TO $n$}
        \STATE $\sKp_{1i} \gets \frac{q_i}{d}$ \label{alg_row1}
    \ENDFOR
    \RETURN $\sKo$
\ENDIF
\end{algorithmic}
\label{alg}
\end{algorithm}

Having justified the structure of the optimal mechanism $\sK_{q, \eps}$ obtained by Algorithm \ref{alg}, it is crucial to rigorously prove that it is $\eps$-LDP, it keeps $q$ invariant, and it achieves the optimal utility $\optutil$. The following theorem demonstrates that $\sK_{q, \eps}$ satisfies all these three properties.  

\begin{theorem}[Optimal Mechanisms]
\label{thrm_optimal_mechanism}
    Let $q$ be a finitely supported distribution  and $\displaystyle q_{\text{min}} \coloneqq \min_{x \in \X} q(x)$. Then $\sK_{q, \eps}$ the output of \Cref{alg} is $\eps$-LDP with $q \sK_{q,\eps} = q$ and it attains the optimal utility  in \Cref{thrm_optimal_utility}, i.e., 

\begin{equation}
 \begin{aligned}
    &\sup_{p \in \Delta(\X)} \df(p \| p \sK_{q, \eps})  
    = \frac{1-q_{\min}}{e^\eps q_{\min} + 1 - q_{\min}} f(0) \\
    &\quad + \frac{e^\eps q_{\min}}{e^\eps q_{\min} + 1 - q_{\min}} f\left(\frac{e^\eps q_{\min} + 1 - q_{\min}}{e^\eps q_{\min}}\right). 
    \end{aligned}   
\end{equation}

\end{theorem}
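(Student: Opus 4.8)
The plan is to induct on the support size $n=|\X|$ along the recursion of \Cref{alg}, with base case $n=2$ supplied by \Cref{lemma_binary}. The engine of the proof is an explicit closed form for the entries of $\sKo$. Set $S_k \coloneqq \sum_{j>k} q_j$, $T_k \coloneqq e^\eps q_k + S_k$, $m_k \coloneqq (T_k-q_k)/T_k$ and $M_k \coloneqq \prod_{l\le k} m_l$ (with $M_0=1$), and define the sequence $a_i \coloneqq M_{i-1}/T_i$. I would first prove by induction that $(\sKo)_{ik}=q_k a_i$ for $i<k$, that $(\sKo)_{kk}=e^\eps q_k a_k$, and that $(\sKo)_{ik}=q_k a_k$ for $i>k$ (assuming, as in \Cref{alg}, that $q$ is increasingly sorted). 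In the inductive step the peeled first row and column match $a_1=1/T_1=1/d$, while the block $m_1\sKob$ reproduces the claimed form after checking the index shift $\bar T_{i'}=T_{i'+1}/S_1$ and $\bar m_l=m_{l+1}$, so that $\bar a_{i'}=(S_1/m_1)\,a_{i'+1}$ and the scaling cancels. Monotonicity of the sequence, $a_{i+1}/a_i=(T_i-q_i)/T_{i+1}\le 1$, is just the elementary inequality $T_{i+1}\ge T_i-q_i$, equivalently $(e^\eps-1)(q_{i+1}-q_i)\ge 0$.

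Row-stochasticity and the invariance $q\sKo=q$ come out together and cheaply: summing row $k$ (resp.\ the $k$-th column weighted by $q$) both reduce to the single identity $\sum_{i<k} q_i a_i + a_k T_k = 1$, which follows by a one-line telescoping induction from $a_k T_k = M_{k-1}$ and $\sum_{i<k} q_i a_i = 1-M_{k-1}$.

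For $\eps$-LDP I would first reduce the constraint \eqref{eq_eps_LDP} to a per-column statement: since $p\sKo(x)=\sum_i p(i)(\sKo)_{ix}$ is linear in $p$, the supremum and infimum over the simplex are attained at Dirac inputs, so \eqref{eq_eps_LDP} is equivalent to $\max_i (\sKo)_{ix}/\min_i (\sKo)_{ix}\le e^\eps$ for every column $x$. Using the explicit form and the monotonicity of $a$, column $k$ (for $k<n$) has minimum $q_k a_k$ and maximum $q_k\max(a_1,e^\eps a_k)$, so the only thing to verify is $a_1\le e^\eps a_k$ for every $k$. This is the heart of the proof and precisely where a naive block-induction fails, since the hypothesis controls only $a_2/a_n$ while peeling off the pivot introduces the larger term $a_1$. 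The key idea is to bypass induction and use the invariance identity above: bounding $a_i\le a_1=1/T_1$ and $\sum_{i<k}q_i=1-S_{k-1}$ yields $a_k T_k\ge\big((e^\eps-1)q_1+S_{k-1}\big)/T_1$, hence $a_1/a_k\le T_k/\big((e^\eps-1)q_1+S_{k-1}\big)$, and the required bound collapses to the elementary inequality $T_k\le e^\eps\big((e^\eps-1)q_1+S_{k-1}\big)$, i.e.\ $(e^\eps-1)(e^\eps q_1+S_k)\ge 0$. Together with the consecutive bound $a_{n-1}\le e^\eps a_n$ for the last column, this gives that every column ratio is at most $e^\eps$, so $\sKo\in\Q_\eps$.

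For the utility, $\sKo$ is feasible by the previous two parts, so by \Cref{lemma_equivalence} it remains only to identify the smallest diagonal entry $\min_i(\sKo)_{ii}$. From $(\sKo)_{ii}=e^\eps q_i a_i$ one gets $(\sKo)_{i+1,i+1}/(\sKo)_{ii}=(q_{i+1}/q_i)(T_i-q_i)/T_{i+1}\ge 1$, which simplifies to $(q_{i+1}-q_i)S_i\ge 0$; thus the diagonal is non-decreasing and $\min_i(\sKo)_{ii}=(\sKo)_{11}=e^\eps q_{\min}/(e^\eps q_{\min}+1-q_{\min})$. Substituting this value into the expression of \Cref{lemma_equivalence} reproduces \eqref{eq_optimal_utility} exactly, matching \Cref{thrm_optimal_utility}; the degenerate case $q_{\min}=0$ is handled by the convention $f(0)=f(0+)$ together with a short continuity argument. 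I expect the inequality $a_1\le e^\eps a_k$ to be the only genuinely delicate point, with everything else reducing to induction and elementary algebra.
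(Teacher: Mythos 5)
Your proposal is correct, and for the hardest part---verifying the $\eps$-LDP constraint---it takes a genuinely different route from the paper. The paper (in \Cref{lemma_K_consistency} and \Cref{thrm_optimal_mechanism_part_2}) never writes the entries of $\sKo$ explicitly: it argues by induction on the block structure, and in each inductive step it bounds the extremes of every column of the inner block $\sM=\sK_{\bar q,\eps}$ by combining the inductive LDP hypothesis with the invariance identity $\sum_i \bar q_i \sM_{i1}=\bar q_1$, then checks the newly added first-row entry against those bounds. You instead solve the recursion in closed form, $(\sKo)_{ik}=q_k a_i$ for $i<k$, $e^\eps q_k a_k$ for $i=k$, and $q_k a_k$ for $i>k$ with $a_i=M_{i-1}/T_i$ (your index-shift identities $\bar T_{i'}=T_{i'+1}/S_1$ and $\bar m_l=m_{l+1}$ check out, as does the base case against \Cref{lemma_binary}), and then establish the per-column ratio bound $a_1\le e^\eps a_k$ globally from the single identity $\sum_{i<k}q_i a_i+a_k T_k=1$. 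The essential trick is the same in both arguments---the consecutive ratios $a_{i+1}/a_i$ are too weak, so the bound must be routed through the invariance identity, exactly as you diagnose---but your version buys an explicit non-recursive description of the mechanism and a one-shot verification of privacy, row-stochasticity, and invariance, at the cost of more notational setup. The utility part (diagonal non-decreasing via $(q_{i+1}-q_i)S_i\ge 0$, hence $\min_i(\sKo)_{ii}=(\sKo)_{11}$, then \Cref{lemma_equivalence}) coincides with the paper's \Cref{thrm_optimal_mechanism_part_2}. The only caveat is bookkeeping rather than substance: the claim that this value is \emph{optimal} still rests on the matching lower bound of \Cref{proposition_converse}, which you, like the paper, treat as a separately established fact.
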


 \Cref{thrm_optimal_mechanism} also confirms that the choice of $f$-divergence for measuring distribution dissimilarity does not alter the optimal mechanism.

It is important to note that our algorithm achieves a time complexity of $O(n^3)$. This is a significant improvement over the naive linear programming approach for solving \eqref{eq_original} in the case of $\tv$-distance. The naive method has a complexity of $O\left(n^5 \log\left(\frac{n}{\delta}\right)\right)$ \citep{LPtime}, where $n$ is the support size of $p$ and $q$, and $\delta$ is the desired accuracy. With an $O(n^3)$ complexity, our method scales efficiently to much larger datasets.

Finally, we briefly examine the scenario where no public data is available. In this situation, a natural choice for a prior is the uniform distribution \( q_{\text{u}} \).

\begin{corollary}\label{theorem_uniform}
    Let $\X$ be a finite set and let $q_{\text{u}}$ represent the uniform distribution on $\X$. Then, the $n$-ary randomized response mechanism, $\sK_{\text{RR}}$ \citep{warner1965randomized}, is optimal in the sense:
    \begin{align}
        \Gamma_f(q_{\text{u}}, \eps)
        &= \sup_{p \in \Delta(\X)} \df(p \| p \sK_{\text{RR}}).
    \end{align}
\end{corollary}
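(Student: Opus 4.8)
The plan is to exploit the fact that $\Gamma_f(q_{\text{u}}, \eps)$ in \eqref{eq_original} is, by definition, an infimum over the feasible set $\{\sK \in \Q_\eps : q_{\text{u}}\sK = q_{\text{u}}\}$. Consequently, to prove optimality of the $n$-ary randomized response mechanism it suffices to verify two things: (i) $\sK_{\text{RR}}$ is feasible, and (ii) its worst-case divergence $\sup_{p}\df(p\|p\sK_{\text{RR}})$ equals the value of $\Gamma_f(q_{\text{u}},\eps)$ predicted by \Cref{thrm_optimal_utility}. Since $\Gamma_f(q_{\text{u}},\eps)$ lower-bounds the worst-case divergence of every feasible mechanism, matching this bound certifies optimality.

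First I would write $\sK_{\text{RR}}$ explicitly as the row-stochastic matrix with diagonal entries $\frac{e^\eps}{e^\eps+n-1}$ and off-diagonal entries $\frac{1}{e^\eps+n-1}$, where $n=|\X|$. Feasibility then splits into two routine checks. For the $\eps$-LDP constraint \eqref{eq_eps_LDP}, a direct computation gives $p\sK_{\text{RR}}(j)=\frac{(e^\eps-1)p(j)+1}{e^\eps+n-1}$, whose value over $p\in\Delta(\X)$ ranges in $[\frac{1}{e^\eps+n-1},\frac{e^\eps}{e^\eps+n-1}]$; the ratio of its extremes is exactly $e^\eps$, so $\sK_{\text{RR}}\in\Q_\eps$. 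For invariance, $\sK_{\text{RR}}$ is doubly stochastic (each column sums to $\frac{e^\eps}{e^\eps+n-1}+(n-1)\frac{1}{e^\eps+n-1}=1$), hence $q_{\text{u}}\sK_{\text{RR}}=q_{\text{u}}$.

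Next I would evaluate the worst-case divergence using \Cref{lemma_equivalence}. Because $\sK_{\text{RR}}$ has all diagonal entries equal, $\sK_{\min}=\frac{e^\eps}{e^\eps+n-1}$, and the lemma yields
\begin{equation*}
\sup_{p\in\Delta(\X)}\df(p\|p\sK_{\text{RR}})=\frac{n-1}{e^\eps+n-1}f(0)+\frac{e^\eps}{e^\eps+n-1}f\!\left(\frac{e^\eps+n-1}{e^\eps}\right).
\end{equation*}
Finally I would specialize \Cref{thrm_optimal_utility} to $q_{\min}=1/n$; substituting $e^\eps q_{\min}+1-q_{\min}=\frac{e^\eps+n-1}{n}$ shows that $\Gamma_f(q_{\text{u}},\eps)$ collapses to exactly the same expression, which closes the argument.

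There is essentially no hard obstacle here: the result is a specialization of the two preceding results, and the only content is the arithmetic of substituting $q_{\min}=1/n$ together with the doubly-stochastic structure of $\sK_{\text{RR}}$. The one conceptual point worth flagging is that $\sK_{\text{RR}}$ need not coincide with the mechanism returned by \Cref{alg} for a uniform prior; optimality is a statement about attaining the optimal value $\Gamma_f(q_{\text{u}},\eps)$, and the symmetry of $\sK_{\text{RR}}$ (all Dirac inputs are simultaneously worst-case) is precisely what makes it a clean, closed-form optimizer in this special case.
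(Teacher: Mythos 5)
Your proof is correct, but it takes a genuinely different route from the paper's. The paper proves this corollary by induction on $n$: it shows that the mechanism returned by \Cref{alg} for the uniform prior is \emph{literally} the $n$-ary randomized response matrix (the base case being the binary mechanism of \Cref{lemma_binary} with $\alpha=1/2$, and the induction step checking that rescaling the $(n-1)$-ary RR block by $m=\frac{e^\eps+n-2}{e^\eps+n-1}$ reproduces the $n$-ary RR entries), and then invokes \Cref{thrm_optimal_utility} to read off the value. You instead bypass \Cref{alg} entirely: you verify directly that $\sK_{\text{RR}}$ is feasible (the computation $p\sK_{\text{RR}}(j)=\frac{(e^\eps-1)p(j)+1}{e^\eps+n-1}$ for the LDP bound, double stochasticity for invariance of $q_{\text{u}}$), evaluate its worst-case divergence via \Cref{lemma_equivalence}, and match it to the converse bound specialized at $q_{\min}=1/n$. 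Your argument is more self-contained and arguably cleaner --- it only needs the lower bound of \Cref{proposition_converse} plus feasibility, not the full machinery of \Cref{alg} and \Cref{thrm_optimal_mechanism_part_2} --- whereas the paper's induction buys the stronger (and mildly interesting) fact that its general-purpose algorithm actually \emph{outputs} randomized response on uniform priors. One small remark: your closing caveat that $\sK_{\text{RR}}$ ``need not coincide with the mechanism returned by \Cref{alg}'' is unnecessary and, in fact, contrary to what the paper establishes --- the two do coincide for the uniform prior; but since your proof never relies on this either way, it does not affect correctness.
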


This result confirms that $\sK_{\text{RR}}$ achieves the optimal utility when the prior is uniform and recovers the utility established by \cite{park2024exactly}, where no public prior was assumed.


Next, we provide a sketch of proof for the results presented in this section.

\section{Overview of Proofs}\label{section_overview}

Our proof strategy is divided into three steps. First, we establish a lower bound for $\optutil$. Next, we show that the optimal mechanism produced by \Cref{alg} is  $\eps$-LDP with $q \sKo = q$. Finally, we demonstrate that it achieves the optimal utility, $\optutil$.

Our argument for establishing the lower bound relies on \Cref{lemma_equivalence}, which demonstrates that the maximal distance between \( p \) and \( p \sK \), measured using any \( f \)-divergence and any kernel, occurs at Dirac distributions and is quantified by \eqref{eq_minimax_quantification}. This result, combined with the \(\eps\)-LDP constraints and the requirement that the public data remains invariant under the optimal mechanism, establishes the lower bound.

\begin{proposition} \label{proposition_converse}
Let $q$ be a finitely supported distribution and $ \displaystyle q_{\min} \coloneqq \min_{x \in \X} q(x)$. Then,  
\begin{equation}
\begin{aligned}
    &\optutil
    \geq \frac{1-q_{\min}}{e^\eps q_{\min} + 1 - q_{\min}} f(0) \\
    &+ \frac{e^\eps q_{\min}}{e^\eps q_{\min} + 1 - q_{\min}} f(\frac{e^\eps q_{\min} + 1 - q_{\min}}{e^\eps q_{\min}}).
\end{aligned}
\end{equation}
\end{proposition}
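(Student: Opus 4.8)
The plan is to combine the variational reduction furnished by \Cref{lemma_equivalence} with a monotonicity property of a one-dimensional function and a direct estimate of the diagonal entries forced by the two constraints. By \Cref{lemma_equivalence}, for every admissible kernel $\sK$ (meaning $\sK\in\Q_\eps$ with $q\sK=q$) the inner supremum collapses to a function of the single scalar $\sK_{\min}=\min_i \sK_{ii}$, namely
\[
\sup_{p\in\Delta(\X)}\df(p\|p\sK)=g(\sK_{\min}),\qquad g(t):=(1-t)f(0)+t\,f\!\left(\tfrac1t\right).
\]
Thus $\optutil=\inf_{\sK} g(\sK_{\min})$ over admissible $\sK$, and it suffices to establish two facts: (a) $g$ is non-increasing on $(0,1]$, and (b) every admissible $\sK$ satisfies $\sK_{\min}\le t^\star:=\frac{e^\eps q_{\min}}{e^\eps q_{\min}+1-q_{\min}}$. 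Together these give $g(\sK_{\min})\ge g(t^\star)$ for every admissible $\sK$, and a one-line computation checks that $g(t^\star)$ is exactly the claimed right-hand side (since $1-t^\star=\frac{1-q_{\min}}{e^\eps q_{\min}+1-q_{\min}}$ and $1/t^\star=\frac{e^\eps q_{\min}+1-q_{\min}}{e^\eps q_{\min}}$).

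For (a), I would observe that $t\mapsto t f(1/t)$ is convex, being the perspective transform of the convex $f$ restricted to numerator $1$; hence $g$ is convex, as the sum of this and the affine map $t\mapsto(1-t)f(0)$. Since $f$-divergences are nonnegative we have $g\ge 0$, while $g(1)=f(1)=0$, so $t=1$ is a global minimizer of the convex function $g$ on $(0,1]$. A convex function is non-increasing to the left of any global minimizer, and here the minimizer sits at the right endpoint, so $g$ is non-increasing on all of $(0,1]$. (If $f(0)=+\infty$ both sides are $+\infty$ whenever $q_{\min}<1$, and the boundary case $\sK_{\min}=0$ only increases the left-hand side; these are handled by lower semicontinuity.)

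For (b), which I expect to be the crux, I would exploit the invariance constraint at the coordinate where $q$ is smallest. Let $j^\star$ satisfy $q(j^\star)=q_{\min}$ and set $a:=\sK_{j^\star j^\star}$. Reading the $\eps$-LDP definition \eqref{eq_eps_LDP} column-wise (taking $p=\delta_i,\ p'=\delta_k$) bounds the ratio of any two entries in column $j^\star$ by $e^\eps$, so $\sK_{i j^\star}\ge e^{-\eps}a$ for every $i$. Substituting this into the invariance identity $\sum_i q(i)\sK_{i j^\star}=q(j^\star)=q_{\min}$ and isolating the diagonal term yields
\[
q_{\min}\;\ge\; q_{\min}\,a+e^{-\eps}a\,(1-q_{\min})\;=\;a\cdot\frac{e^\eps q_{\min}+1-q_{\min}}{e^\eps},
\]
hence $a\le t^\star$ and therefore $\sK_{\min}\le a\le t^\star$. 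Combining with the monotonicity from (a) gives $\sup_{p}\df(p\|p\sK)=g(\sK_{\min})\ge g(t^\star)$ for every admissible $\sK$, and taking the infimum over $\sK$ proves the proposition.

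The main obstacle is step (b): the estimate succeeds only because \emph{both} constraints are used at once and applied at the correct coordinate (the $q_{\min}$-column), and one must verify that the operative reading of \eqref{eq_eps_LDP} is the column-wise max/min ratio obtained at Dirac inputs. The monotonicity in (a) is conceptually routine once the perspective/convexity observation is made, and the final verification that $g(t^\star)$ matches the target expression is purely arithmetic.
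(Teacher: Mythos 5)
Your proposal is correct and follows essentially the same route as the paper: reduce via \Cref{lemma_equivalence} to $g(\sK_{\min})$, bound $\sK_{\min}\le \frac{e^\eps q_{\min}}{e^\eps q_{\min}+1-q_{\min}}$ by combining the column-wise $\eps$-LDP ratio bound with the invariance identity $\sum_i q(i)\sK_{ij^\star}=q_{\min}$ (the paper's \Cref{lemma_UB_diagonal}, which you just specialize directly to the $q_{\min}$ coordinate), and conclude by monotonicity of $g$ (the paper's \Cref{lemma_g_decreasing}, proved there by a secant-slope argument rather than your equivalent perspective-convexity argument). The differences are cosmetic, not substantive.
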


\vspace{-12pt}

\begin{sketchofproof}
    Let $\displaystyle \sK_{min} = \min_{i} \sK_{ii}$. By \Cref{lemma_equivalence} , we have that for a given mechanism $\sK$
    \begin{equation}\label{eq_f_div_equivalence}
        \sup_{p \in \Delta(\X)} D_{f} (p \| p \sK) 
        = \sK_{\min} f(\frac{1}{\sK_{\min}}) + f(0) (1 - \sK_{\min}).
    \end{equation}
    Additionally, using the $\eps$-LDP constraints and that $q \sK = q$, we can show that 
    $$
         \sK_{\min}\leq \frac{e^\eps q_{\min}}{e^\eps q_{\min} + 1 - q_{\min}}.
    $$
    Since $x \mapsto x f(\frac{1}{x}) + f(0) (1 - x)$ is a decreasing function for $x \in [0,1]$,
\begin{equation}
     \begin{aligned}
        &\sK_{\min} f(\frac{1}{\sK_{\min}}) + f(0) (1 - \sK_{\min})& \\
        &\geq
         \frac{1-q_{\min}}{e^\eps q_{\min} + 1 - q_{\min}} f(0) \\
         &+  \frac{e^\eps q_{\min}}{e^\eps q_{\min} + 1 - q_{\min}} f(\frac{e^\eps q_{\min} + 1 - q_{\min}}{e^\eps q_{\min}}).
    \end{aligned}   
\end{equation}

    By \eqref{eq_f_div_equivalence}, we have the result.
    \qedhere
\end{sketchofproof}


Now, we establish that the output of \Cref{alg}, $\sKo$, is a valid mechanism. 
\begin{proposition}
 \label{lemma_K_consistency}
The mechanism $\sKo$ calculated by \Cref{alg} is $\eps$-LDP  with $q \sKo = q$.   
\end{proposition}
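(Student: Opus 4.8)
The plan is to prove both claims---that $\sKo$ is $\eps$-LDP and that $q\sKo=q$---simultaneously by induction on the support size $n=|\X|$, using the recursion of Algorithm~\ref{alg} as the inductive mechanism. Throughout, write $q_{\min}=q_1$ (the input is sorted), $d=e^\eps q_{\min}+1-q_{\min}$, $m=1-q_{\min}/d$, $S_2=1-q_{\min}$ and $\bar q=S_2^{-1}(q_2,\dots,q_n)$, which is again increasingly sorted. For the base case $n=2$ the mechanism is the explicit matrix of Proposition~\ref{lemma_binary}: row-stochasticity and $q\sKo=q$ are immediate, while $\eps$-LDP reduces to two per-column ratio checks---the first column has ratio exactly $e^\eps$, and the second column has ratio $1+\frac{(e^\eps-1)\alpha}{1-\alpha}$, which is $\le e^\eps$ precisely because $\alpha=q_{\min}\le\tfrac12$.

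For the inductive step I assume $\sKob$ is a row-stochastic, $\bar q$-invariant, $\eps$-LDP kernel on $n-1$ symbols. Nonnegativity of $\sKo$ is clear. The first row of $\sKo$ sums to $(e^\eps q_{\min}+\sum_{j\ge2}q_j)/d=d/d=1$, and every lower row sums to $q_{\min}/d+m\cdot1=1$ since $\sKob$ is row-stochastic. For invariance, column $1$ gives $\sum_i q_i(\sKo)_{i1}=q_{\min}(e^\eps q_{\min}+\sum_{i\ge2}q_i)/d=q_{\min}$, and for $j\ge2$ the identity $q_i=S_2\bar q_{i-1}$ together with $\bar q\sKob=\bar q$ yields $\sum_i q_i(\sKo)_{ij}=q_{\min}q_j/d+m q_j=q_j$. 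I expect these computations to be purely mechanical.

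The crux is $\eps$-LDP, for which I first recall that $\sK\in\Q_\eps$ is equivalent to $\max_i \sK_{ij}\le e^\eps\min_i \sK_{ij}$ for every column $j$ (the supremum in \eqref{eq_eps_LDP} is attained by Dirac inputs supported on the extremal rows). To make the column ratios transparent I would record, and propagate through the recursion, the closed form of the entries. Setting $S_r=\sum_{i\ge r}q_i$, $\tilde d_r=e^\eps q_r+\sum_{i>r}q_i=(e^\eps-1)q_r+S_r$, and $P_r=\prod_{l<r}(\tilde d_l-q_l)/\tilde d_l$, one checks by induction that $(\sKo)_{lk}=q_k P_l/\tilde d_l$ for $l<k$, $(\sKo)_{kk}=q_k\,e^\eps P_k/\tilde d_k$, and $(\sKo)_{lk}=q_k P_k/\tilde d_k$ for $l>k$; the recursion merely prepends a scaled row and column, and the level-$2$ quantities satisfy $\tilde d^{(2)}_r=\tilde d_{r+1}/S_2$ and $P^{(2)}_r=P_{r+1}/P_2$, which makes the propagation routine. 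The decisive observation is that every entry of column $k$ carries the common factor $q_k$, so all within-column ratios are independent of $q_k$, and $\eps$-LDP of column $k$ is equivalent to the profile $\{P_l/\tilde d_l\}_{l<k}\cup\{e^\eps P_k/\tilde d_k,\,P_k/\tilde d_k\}$ having max/min ratio at most $e^\eps$.

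Here the sortedness finally bites: the one-step ratio $\frac{P_{l+1}/\tilde d_{l+1}}{P_l/\tilde d_l}=\frac{(e^\eps-1)q_l+S_{l+1}}{(e^\eps-1)q_{l+1}+S_{l+1}}\le1$ shows $l\mapsto P_l/\tilde d_l$ is non-increasing, so with $c=P_k/\tilde d_k$ the whole profile lies in $[c,e^\eps c]$ as soon as the largest entry satisfies $P_1/\tilde d_1\le e^\eps\,P_k/\tilde d_k$. Equivalently, everything reduces to the single inequality $R_k:=\prod_{l=1}^{k-1}\frac{\tilde d_{l+1}}{\tilde d_l-q_l}\le e^\eps$, and I expect this to be the one genuinely nontrivial point of the whole proof. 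I would establish it through the sharper surrogate bound $R_k\le\tilde d_k/S_k$, proved by a secondary induction on $k$: the inductive step is equivalent to $S_{k+1}\tilde d_k\le S_k(\tilde d_k-q_k)$, which upon substituting $S_k=q_k+S_{k+1}$ collapses to $(e^\eps-1)q_k^2\ge0$. Finally $\tilde d_k/S_k=1+(e^\eps-1)q_k/S_k\le e^\eps$ because $q_k\le S_k$, which closes the $\eps$-LDP claim and the induction. The monotone surrogate $\tilde d_k/S_k$---the device that converts the ordering $q_1\le\cdots\le q_n$ into a usable telescoping bound---is the key idea, and I anticipate the remaining steps to be bookkeeping.
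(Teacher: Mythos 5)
Your proof is correct, and while its outer scaffolding matches the paper's --- induction on the support size with the binary mechanism of Proposition~\ref{lemma_binary} as base case, and the same mechanical computations for row-stochasticity and the invariance $q\sK_{q,\eps}=q$ --- the way you verify the $\eps$-LDP property is genuinely different. The paper keeps the recursive sub-block $\sM=\sK_{\bar q,\eps}$ abstract: by the inductive hypothesis only the constraints coupling the new first row/column to the sub-block need checking, and it bounds $\min_i\sM_{i(j-1)}$ and $\max_i\sM_{i(j-1)}$ by combining the invariance $\bar q\,\sM=\bar q$ with the LDP property of $\sM$, after which everything reduces to inequalities such as $q_1(e^\eps-1)\le q_2(e^\eps-1)$ that follow from sortedness. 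You instead unroll the recursion completely into a closed-form product formula for every entry (which I verified propagates correctly through the identities $\tilde d^{(2)}_r=\tilde d_{r+1}/S_2$ and $P^{(2)}_r=P_{r+1}/P_2$, with $m=P_2$), observe that the common factor $q_k$ makes each column's ratio profile depend only on $\{P_l/\tilde d_l\}$, and close the argument with the telescoping bound $R_k\le\tilde d_k/S_k\le e^\eps$. Your route costs more bookkeeping but buys more: an explicit formula for the optimal mechanism, a uniform treatment of all columns (where the paper checks $j=2$ and asserts the rest follow ``with minimal changes''), and a transparent identification of exactly where sortedness enters (the monotonicity of $l\mapsto P_l/\tilde d_l$ and the single inequality $R_k\le e^\eps$). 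The paper's route is leaner and avoids committing to a closed form. Both are valid proofs of the proposition.
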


Subsequently, we use the closed form characterization of the optimal binary mechanism presented in \Cref{lemma_binary} and induction to prove the optimality of the mechanism obtained from \Cref{alg}, $\sK_{q,\eps}$. To do this, we show that by the construction of $\sKo$, it achieves the lower bound presented in \Cref{proposition_converse}. This would complete the proof for \Cref{thrm_optimal_utility} and \Cref{thrm_optimal_mechanism}.

\begin{proposition}
\label{thrm_optimal_mechanism_part_2}
 Let $q$ be a finitely supported distribution  and  $\displaystyle q_{\text{min}} \coloneqq \min_{x \in X} q(x)$. Then, $\sK_{q, \eps}$, the mechanism obtained in \Cref{alg} attains the lower bound in \Cref{proposition_converse}, i.e.,
\begin{align}
\begin{split}
 &\sup_{p \in \Delta(\X)} \df(p \| p \sK_{q, \eps}) 
 = \frac{1-q_{\min}}{e^\eps q_{\min}  
 + 1 - q_{\min}} f(0) \\
 &~~~~ + \frac{e^\eps q_{\min}}{e^\eps q_{\min} + 1 - q_{\min}} f(\frac{e^\eps q_{\min} + 1 - q_{\min}}{e^\eps q_{\min}}).
 \end{split}
\end{align}
\end{proposition}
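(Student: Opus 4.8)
The plan is to combine \Cref{lemma_equivalence} with an induction on the support size $n=|\X|$. Applying \Cref{lemma_equivalence} to the specific kernel $\sKo$ returned by \Cref{alg} gives
$$
\sup_{p\in\Delta(\X)}\df(p\|p\sKo)=f(0)\,(1-\sK_{\min})+\sK_{\min}\,f\Paren{\tfrac{1}{\sK_{\min}}},\qquad \sK_{\min}=\min_i\sKp_{ii}.
$$
A direct substitution shows that the right-hand side of the claimed identity is precisely this expression evaluated at $\sK_{\min}=\r$. Since $x\mapsto f(0)(1-x)+xf(1/x)$ determines the value uniquely, the whole proposition reduces to the combinatorial claim that the smallest diagonal entry of $\sKo$ equals $\r$. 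Because $q$ is sorted increasingly we have $q_{\min}=q_1$, and \Cref{alg} sets $\sKp_{11}=\ro=\r$ by construction; thus it suffices to prove that $\sKp_{11}\le\sKp_{ii}$ for every $i$, i.e.\ that the first diagonal entry is the minimal one.

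I would establish $\sKp_{11}=\min_i\sKp_{ii}$ by induction on $n$. The base case $n=2$ is immediate from the closed form of \Cref{lemma_binary}: there $\sKp_{11}=\frac{e^\eps\alpha}{e^\eps\alpha+1-\alpha}$ while $\sKp_{22}=\frac{e^\eps\alpha+1-2\alpha}{e^\eps\alpha+1-\alpha}$, and $\sKp_{11}\le\sKp_{22}$ is equivalent to $\alpha\le\tfrac12$, which holds by hypothesis. For the inductive step, recall that \Cref{alg} fills the lower-right $(n-1)\times(n-1)$ block of $\sKo$ with $m\,\sKob$, where $d=e^\eps q_1+1-q_1$, $m=1-\frac{q_1}{d}$, and $\bar q$ is the increasingly sorted normalization of $(q_2,\dots,q_n)$, so $\bar q_{\min}=\bar q_1=\frac{q_2}{1-q_1}$. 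Hence $\sKp_{ii}=m\,(\sKob)_{(i-1)(i-1)}$ for $i\ge2$, and the induction hypothesis applied to $\bar q$ tells us that $(\sKob)_{11}$ is the smallest diagonal entry of $\sKob$; multiplying by $m>0$ preserves this, so $\min_{i\ge2}\sKp_{ii}=m\,(\sKob)_{11}=m\cdot\frac{e^\eps\bar q_1}{e^\eps\bar q_1+1-\bar q_1}$. It therefore remains only to verify the single inequality $\sKp_{11}\le m\,(\sKob)_{11}$.

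This last inequality is the technical heart of the argument, and I expect it to be the main (if short) obstacle, since it is exactly where the increasing ordering of $q$ is used. Substituting $\bar q_1=\frac{q_2}{1-q_1}$ yields $m\,(\sKob)_{11}=\frac{d-q_1}{d}\cdot\frac{e^\eps q_2}{e^\eps q_2+1-q_1-q_2}$, while $\sKp_{11}=\frac{e^\eps q_1}{d}$. Clearing the positive denominators and cancelling the common term $e^{2\eps}q_1q_2$, the inequality $\sKp_{11}\le m\,(\sKob)_{11}$ simplifies to $q_1(1-q_1-q_2)\le q_2(1-2q_1)$, which rearranges to $(q_1-q_2)(1-q_1)\le0$. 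This holds because $q_1\le q_2$ (sorting) and $1-q_1\ge0$ (as $q_1$ is a probability), completing the induction. Consequently $\sK_{\min}=\r$, which together with the first paragraph proves the identity; as a by-product the same computation establishes the monotonicity $\sKp_{11}\le\sKp_{22}\le\dots\le\sKp_{nn}$ invoked in the description of \Cref{alg}.
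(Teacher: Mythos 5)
Your proposal is correct and follows essentially the same route as the paper: reduce via \Cref{lemma_equivalence} to showing $\min_i \sKp_{ii} = \sKp_{11} = \r$, establish the diagonal ordering by induction on $n$ using the recursive block structure $\sKp_{ii} = m(\sKob)_{(i-1)(i-1)}$, and verify the one remaining inequality $\sKp_{11}\le m(\sKob)_{11}$, which after clearing denominators reduces to $(q_1-q_2)(1-q_1)\le 0$ — exactly the paper's computation in a slightly cleaner form. Your treatment of the $n=2$ base case (via $\alpha\le\tfrac12$) is actually more explicit than the paper's.
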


\begin{sketchofproof}
    Let $\sKo$ be the output of \Cref{alg}. Then we can show by construction of $\sKo$ and induction that $\sKp_{11} \leq \sKp_{22} \leq \dots \leq \sKp_{nn}$. Hence, 
    \begin{equation}\label{proposition4_eq}
       \displaystyle \min_{i}~\sKp_{ii} = \sKp_{11} = \frac{e^\eps q_{\min}}{e^\eps q_{\min} + 1 - q_{\min}}.
    \end{equation}
    Define $\displaystyle \sK_{\min} \coloneqq \min_{i}~\sKp_{ii}$. Then, by \Cref{lemma_equivalence}, we have that 
    \begin{align*}
    \centering
        \sup_{p \in \Delta(\X)} \D_f(p \| p \sKo) 
        &=  f(0) (1 - \sK_{\min}) + \sK_{\min} f(\frac{1}{\sK_{\min}}),
    \end{align*}
from which the desired result follows. 
\qedhere
\end{sketchofproof}

Next, with the minimax optimal mechanism given by \Cref{alg}, we examine the utility of our sampling approach compared to the existing state-of-the-art sampler.

\section{Experiments}

In this section, we compare the performance of our minimax optimal private sampler to the state-of-the-art private sampler, the relative mollifier framework (\Cref{def_RM}). The evaluation is done on synthetic data and two real-world datasets: Avezu CTR \citep{avazu_ctr} and MovieLens 1M \citep{movielens1m}. The code for experiments is available on Github \footnote{ \href{https://github.com/bzamanlooy/LDP-Sampling-Public-data}{https://github.com/bzamanlooy/LDP-sampling-public-data}}, and details on the computing resources are given in \Cref{ap_infrastructure}.

\subsection{Computation of Samplers}

For our minimax optimal sampler, given the public distribution $q$ and privacy parameter $\varepsilon$, we compute the optimal mechanism $\sK_{q, \varepsilon}$ using Algorithm~\ref{alg} and apply it to each individual's distribution $p_i$ to derive the corresponding sampling distribution $\hat{p}_i \coloneqq p_i \sK_{q,\eps}$.


As for the relative mollifier sampling framework (\Cref{def_RM}), we interpret the reference distribution in \Cref{def_RM} as the public prior. In their framework, we determine the normalizing constant $C$ described in \eqref{eq_information_projection_solution} using a bisection approach \citep{brent2013algorithms}. While \citet{husain2020local} focus exclusively on $\kl$-divergence in the relative mollifier framework, we extend their work to include the $\tv$-distance, offering an additional benchmark. This extension uses linear programming, with details provided in Appendix \ref{appendix_Husain_LP}.

The following sections outline the experimental setup and present the results for each dataset. Each subsection provides details on the public prior and individuals' distributions specific to the dataset.

\subsection{Synthetic Dataset}\label{Section_Numerical_verification}

We use synthetic data to simulate the local distribution of a single user and their public prior. Our goal is to examine how the gap between the user’s original distribution and the sampling distribution is affected by the user’s data.

In each round, the elements of the public distribution, $q$, are randomly sampled from the uniform distribution, $U[0,1]$, and are normalized to sum to 1. The local distribution $p = [p^1, \dots, p^n]$ is constructed by fixing the first element, $p^1$, and varying it from $1/n$ to 1, while keeping the remaining elements equal to ensure their total sum is also 1. As $p^1$ increases, $p$ becomes more concentrated, approximating a Dirac distribution. This allows us to observe scenarios where our approach, designed to minimize worst-case divergence, is expected to outperform the relative mollifier sampling framework due to \Cref{lemma_equivalence}.

In \Cref{fig_experiment_one_user}, we report the average $\tv$-distance and $\kl$-divergence between the user’s local and sampling distributions across 10 runs, along with standard errors, for a support size of $n = 100$ and privacy parameter $\varepsilon = 8$.


\begin{figure}[!htb]
   \begin{minipage}{0.5\textwidth}
     \centering
     \includegraphics[width=0.85\textwidth]{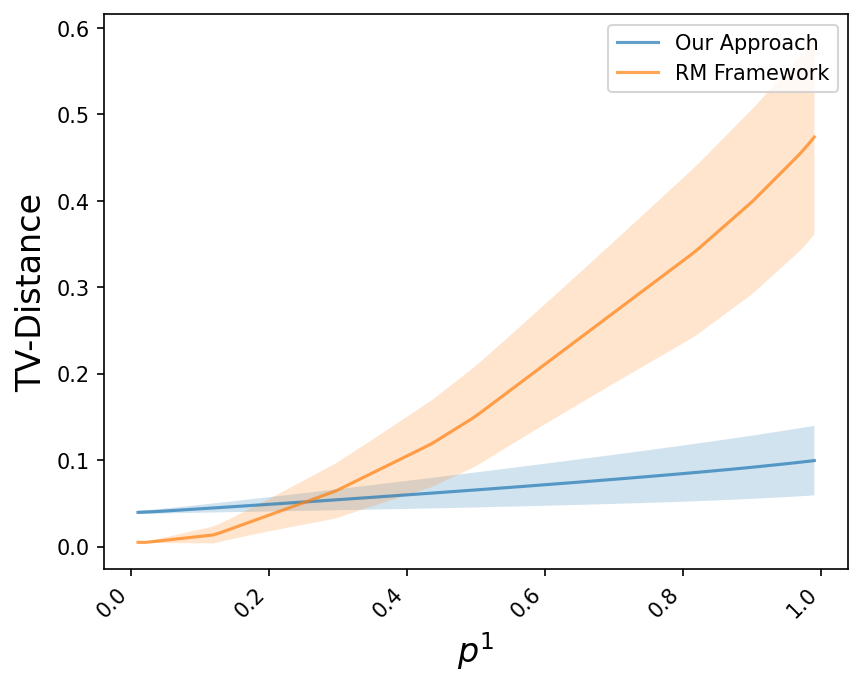}
   \end{minipage}\hfill
   \begin{minipage}{0.5\textwidth}
     \centering
     \includegraphics[width=0.85\textwidth]{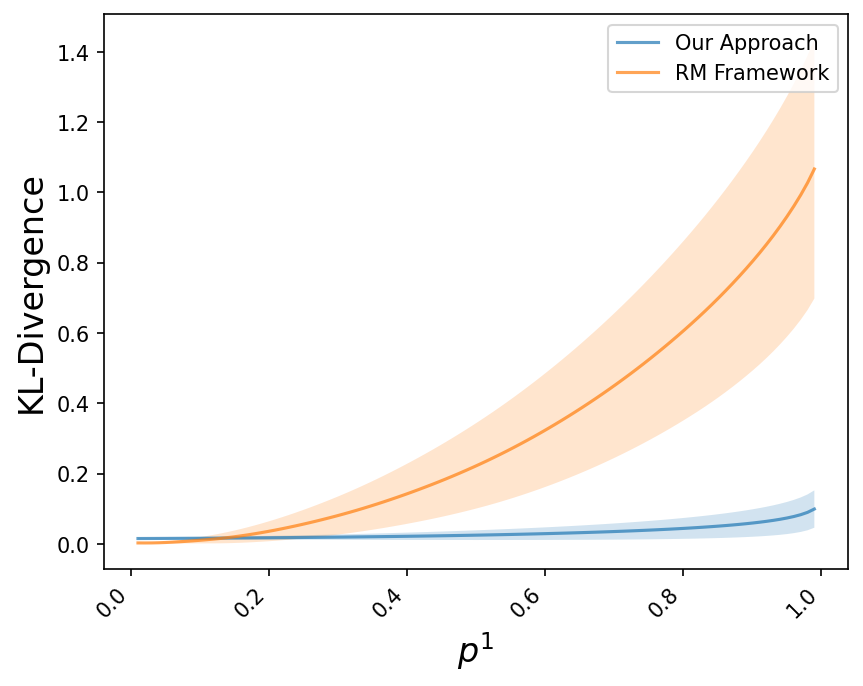}
   \end{minipage}
\captionsetup{skip=2pt} 
\caption{Comparison of our private sampling approach with the relative mollifier sampling framework using a simulated public prior. The local distribution $p$ is constructed by fixing $p^1$, with the rest identical and summing to 1. As $p^1$ increases, $p$ approaches a Dirac distribution, where our method is expected to outperform. Average $\tv$-distance and $\kl$-divergence are reported over 10 runs for $n = 100$ and $\varepsilon = 8$.}
    \label{fig_experiment_one_user}
\end{figure}



\Cref{fig_experiment_one_user} shows that when the user's distribution is closer to uniform (lower $p^1$), the relative mollifier sampling framework performs better. However, as $p^1$ increases and the user's local distribution approaches Dirac, our approach introduces significantly less distortion as expected.

 \subsection{Avezu Click Rate Prediction Dataset}


In this section, we demonstrate the practical improvements in a scenario where an AdTech company determines the next website to display an ad to a user, while protecting the privacy of the user’s local data.

\textbf{Dataset:} The dataset contains 40M rows with users' device IDs, the websites where ads were shown, and whether users clicked on them. It also includes nine anonymized user attributes (C1, C14–C21), each with multiple subcategories.

\textbf{Task:} The goal is to simulate an AdTech company selecting the optimal website to display an ad where a user is most likely to click, while ensuring the protection of their private click history. Each user's empirical distribution of their click history is treated as their individual distribution, \( p \). To approximate access to a public prior, we group users based on anonymized attributes and subcategories. The average empirical distribution of click histories within each subcategory serves as the public prior, \( q \), for each group.

We select the top 100 websites with the most clicks in each subcategory to predict the website most likely to receive a click from users. We keep users with at least 20 clicks on these websites and limit selection to subcategories with at least $100$ users. This ensures each subcategory represents a sufficiently large, identifiable demographic, where public data, such as aggregate statistics, would likely be available.

\textbf{Reporting Metric:}  
We calculate the $\tv$-distance between users' local distributions and their corresponding sampling distributions, obtained using our minimax optimal approach and the relative mollifier sampling framework, and report the maximum observed $\tv$-distance across all users for each attribute and subcategory.

\textbf{Results:} We present the results for the C18 category with $\varepsilon = 12$. As shown in \Cref{fig_avezu}, our approach significantly outperforms the relative mollifier sampling framework. For example, in subcategory 2, our method achieves a $9$-fold improvement in reducing the maximum $\tv$-distance.

Additionally, we provide a comparison across all attributes and subcategories for $\varepsilon = 8,12,$ and $16$, as shown in \Cref{additional_experiments}. When considering the results for all attributes and subcategories, our approach outperforms the relative mollifier sampling framework in 90.5\% of cases, with an average improvement of 0.46 in the maximum $\tv$-distance. In contrast, the relative mollifier sampling framework outperforms ours in 9.5\% of cases, with a smaller improvement of 0.009 in the maximum $\tv$-distance.

\begin{figure}[!htb]
    \centering
    \includegraphics[width=0.85\linewidth]{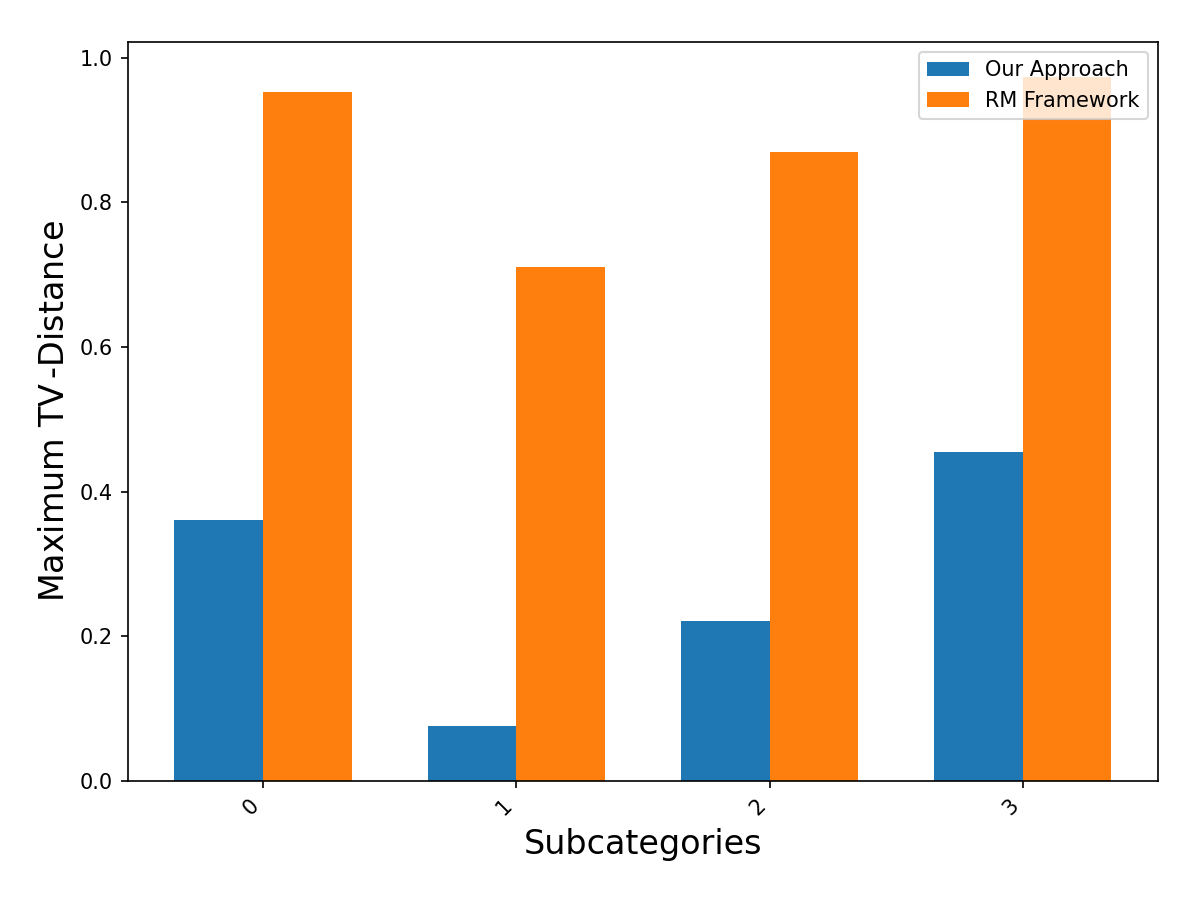}

\caption{Comparison of our private sampling method with the relative mollifier sampling framework  for inferring the next website for an AdTech company to display ads on. For each user, we first compute the $\tv$-distance between their local distribution and the corresponding sampling distribution. The figure then reports the maximum $\tv$-distance observed within each Subcategory for $\varepsilon = 12$.}
    \label{fig_avezu}

\end{figure}

\vspace{-15pt}

\subsection{MovieLens Datasets}




\textbf{Datasets:}  The MovieLens 100K and MovieLens 1M datasets contain 100,000 and 1 million ratings, respectively, provided by 1,000 and 6,000 users. Both datasets include demographic details, with our analysis focusing on age range.

\textbf{Task:}  
The goal is to simulate a streaming company's decision-making process for recommending the next movie genre to a user based on their age demographic while protecting their privacy.

For each user, we construct a local distribution, \( p \), by categorizing their rated movies based on the primary genre, summing the ratings within each genre, and normalizing these sums to obtain a probability distribution. With approximately 20 genres in the dataset, this results in a distribution over genre categories.  To estimate the public prior, \( q \), we group users from the MovieLens 100K dataset based on their age range and apply the same procedure within each group.

\textbf{Reporting Metric:}  
We calculate the $\tv$-distance between the local distributions of users and their corresponding sampling distributions and report the maximum among all users. This comparison is performed in age groups with a privacy parameter of $\eps = 5$.

\begin{figure}[!htb] 
\centering \includegraphics[width=0.85\linewidth]{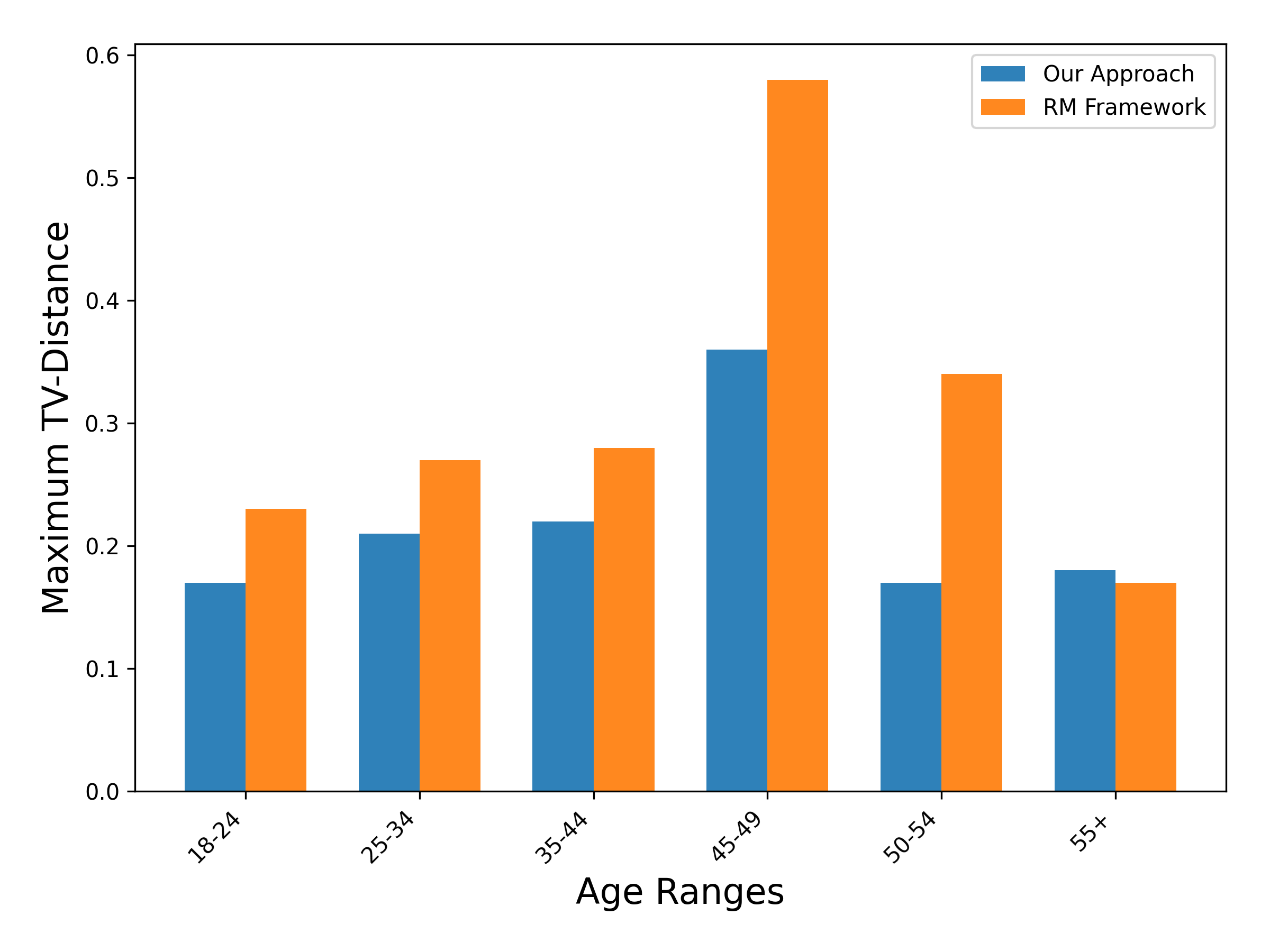} 

\caption{Comparison of our private sampling method with the relative mollifier sampling framework for inferring the genre of the next movie users are likely to watch. For each user, we first compute the $\tv$-distance between their local distribution and the corresponding sampling distribution. The figure then reports the maximum $\tv$-distance observed within each age group for $\varepsilon = 5$.}

\label{fig_movielens_main}

\end{figure}

\vspace{-10pt}
\textbf{Result:} As shown in \Cref{fig_movielens_main}, our method achieves a 41\% overall improvement in preserving the private distribution compared to the RM framework, demonstrating a substantial enhancement in maintaining the worst-off user's distribution.

Additionally, we present the results for $\varepsilon = 1$ to $8$, as shown in \Cref{additional_experiments_2}. When considering all privacy parameters, our approach outperforms the relative mollifier sampling framework in 54.1\% of cases, achieving an average improvement of 0.11 in the maximum $\tv$-distance. In contrast, the relative mollifier sampling framework outperforms our approach in 41.6\% of cases, with a much smaller average improvement of 0.03 in the maximum $\tv$-distance.


In summary, our experiments show that our minimax-optimal private sampler consistently outperforms the state-of-the-art approach across various scenarios, all while preserving the same level of privacy guarantees.

\section{Disclosure of Funding}

The work of B. Zamanlooy was supported in part by the NSERC Gradute Scholarsip.  The work of M. Diaz was supported in part by the Programa de Apoyo a Proyectos de Investigación e Innovación Tecnológica (PAPIIT) under grant IN103224. The work of S. Asoodeh was supported in part by the NSERC of Canada.

\newpage
\bibliography{references}
\bibliographystyle{plainnat}

\newpage
\section*{Checklist}



 \begin{enumerate}

 \item For all models and algorithms presented, check if you include:
 \begin{enumerate}
   \item A clear description of the mathematical setting, assumptions, algorithm, and/or model. [Yes/No/Not Applicable] Yes. We have described \Cref{alg} in 4 steps in section 2 and provided the pseudo code.
   \item An analysis of the properties and complexity (time, space, sample size) of any algorithm. [Yes/No/Not Applicable]
   Yes. The time complexity analysis of \Cref{alg} is given at the end of Section 3.
   \item (Optional) Anonymized source code, with specification of all dependencies, including external libraries. [Yes/No/Not Applicable]
   Yes - We have included the code with specification of all dependencies for all experiments in the supplementary material.
 \end{enumerate}

 \item For any theoretical claim, check if you include:
 \begin{enumerate}
   \item Statements of the full set of assumptions of all theoretical results. [Yes/No/Not Applicable]
   Yes. All assumptions are stated for each theoretical claim in the main body of the paper.
   \item Complete proofs of all theoretical results. [Yes/No/Not Applicable]
   Yes. These are provided in the appendix. Additionally, we provide a sketch of the proofs in the main body of the paper.
   \item Clear explanations of any assumptions. [Yes/No/Not Applicable]     Yes. All assumptions are stated for each theoretical claim in the main body of text.
 \end{enumerate}

 \item For all figures and tables that present empirical results, check if you include:
 \begin{enumerate}
   \item The code, data, and instructions needed to reproduce the main experimental results (either in the supplemental material or as a URL). [Yes/No/Not Applicable]
    Yes.
    \href{https://github.com/bzamanlooy/LDP-Sampling-Public-data}{https://github.com/bzamanlooy/LDP-Sampling-Public-data}
   \item All the training details (e.g., data splits, hyperparameters, how they were chosen). [Yes/No/Not Applicable] No Applicable.
         \item A clear definition of the specific measure or statistics and error bars (e.g., with respect to the random seed after running experiments multiple times). [Yes/No/Not Applicable] Yes. This is provided in the main body of the text in the experiments section.
         \item A description of the computing infrastructure used. (e.g., type of GPUs, internal cluster, or cloud provider). [Yes/No/Not Applicable] Yes. We provide it in \Cref{ap_infrastructure} and mention it in the main text.
 \end{enumerate}

 \item If you are using existing assets (e.g., code, data, models) or curating/releasing new assets, check if you include:
 \begin{enumerate}
   \item Citations of the creator If your work uses existing assets. Yes. We have cited the creators of the datasets we used in our experiments. 
   \item The license information of the assets, if applicable.  Not Applicable
   \item New assets either in the supplemental material or as a URL, if applicable. Not Applicable
   \item Information about consent from data providers/curators.  Not Applicable
   \item Discussion of sensible content if applicable, e.g., personally identifiable information or offensive content.  Not Applicable
 \end{enumerate}

 \item If you used crowdsourcing or conducted research with human subjects, check if you include:
 \begin{enumerate}
   \item The full text of instructions given to participants and screenshots. Not Applicable
   \item Descriptions of potential participant risks, with links to Institutional Review Board (IRB) approvals if applicable. Not Applicable
   \item The estimated hourly wage paid to participants and the total amount spent on participant compensation. Not Applicable
 \end{enumerate}

 \end{enumerate}

\clearpage
\onecolumn
\appendix

\section{Appendix}

\subsection{Preliminary Results}
We first present some results that are useful in showing our main results.
\begin{lemma}\label{lemma_g_decreasing}
 Given a convex function $f:[0, \infty) \rightarrow \mathbb{R}$ such that $f(1)=0$ and $f(0) < \infty$, the function 
 \begin{equation}
     g(x) = f(0) + x \left( f\left(1/x\right) -f(0)\right)
 \end{equation}

 is a non-increasing function on $0 \leq x \leq 1$. 
\end{lemma}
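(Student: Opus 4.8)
The plan is to show that $g(x) = f(0) + x(f(1/x) - f(0))$ is non-increasing on $(0,1]$ by analyzing its derivative, and then extending the conclusion to the boundary point $x=0$ by a convexity/continuity argument. First I would compute $g'(x)$. Writing $g(x) = f(0)(1-x) + x f(1/x)$, differentiation gives
\begin{equation*}
g'(x) = -f(0) + f(1/x) + x \cdot \frac{d}{dx}f(1/x) = -f(0) + f(1/x) - \frac{1}{x} f'(1/x).
\end{equation*}
So the claim $g'(x) \le 0$ is equivalent to $f(1/x) - \frac{1}{x}f'(1/x) \le f(0)$ for all $x \in (0,1]$.

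The key step is to recognize this inequality as a statement about the convexity of $f$. Substituting $t = 1/x \ge 1$, the desired inequality becomes $f(t) - t f'(t) \le f(0)$, i.e.\ $f(0) \ge f(t) + f'(t)(0 - t)$. This is precisely the supporting-line (first-order) inequality for the convex function $f$ evaluated between the point $t$ and the point $0$: for any convex differentiable $f$, the tangent line at $t$ lies below the graph, so $f(0) \ge f(t) + f'(t)(0-t)$. Hence $g'(x) \le 0$ on $(0,1]$ and $g$ is non-increasing there. I would phrase this carefully using the standard subgradient inequality so that the argument does not actually require differentiability of $f$ at every point; where $f$ is not differentiable one replaces $f'(t)$ by any subgradient and the same tangent-line bound holds.

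Finally I would handle the endpoint $x = 0$ and the convention $f(0) = f(0+)$. Since $g(0) = f(0)$ by direct substitution, and $g$ is non-increasing on $(0,1]$ with $\lim_{x \to 0^+} g(x) = f(0)$ (using $\lim_{t\to\infty} f(t)/t = f'(\infty)$ together with the fact that $x f(1/x) \to f(0)$ as the recession behaviour of $f$ is controlled near $0$ via the convention), the limiting value matches $g(0)$, so monotonicity extends continuously to the closed interval $[0,1]$. The main obstacle I anticipate is the rigorous treatment of non-differentiability of $f$ and the boundary behaviour at $x=0$: the clean derivative computation only works where $f$ is differentiable, so the cleanest route is to avoid derivatives entirely and argue directly from the secant/supporting-line characterization of convexity, which immediately yields that $x \mapsto x(f(1/x) - f(0)) = \frac{f(1/x)-f(1)}{1/x - 1}\cdot\frac{(1/x-1)x}{1} $ is a monotone rescaling of a difference quotient of $f$, and difference quotients of convex functions are monotone.
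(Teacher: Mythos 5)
Your core argument is correct and is essentially the paper's: the paper observes directly that $x\bigl(f(1/x)-f(0)\bigr)=\frac{f(1/x)-f(0)}{1/x-0}$ is the secant slope of $f$ from $0$ to $1/x$, which is non-decreasing in $1/x$ by convexity, hence non-increasing in $x$; your supporting-line inequality $f(0)\ge f(t)+f'(t)(0-t)$ at $t=1/x$ is just the differential (subgradient) form of the same convexity fact, so the two proofs buy the same thing, with the secant version being slightly cleaner since it never touches differentiability. Two small inaccuracies in your write-up are worth fixing, though neither breaks the monotonicity claim on $(0,1]$. First, the closing identity anchoring the difference quotient at $1$ is wrong: with $t=1/x$ and $f(1)=0$ your right-hand side simplifies to $f(t)/t$, whereas the left-hand side is $\bigl(f(t)-f(0)\bigr)/t$; the correct and useful identity is the one anchored at $0$. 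Second, the endpoint claim $\lim_{x\to0^+}g(x)=f(0)$ is false in general: $\lim_{x\to0^+}xf(1/x)=\lim_{t\to\infty}f(t)/t=f'(\infty)$, so $g(x)\to f(0)+f'(\infty)$, which is $+\infty$ for $f(t)=t\log t$ (indeed there $g(x)=\log(1/x)$). The value $g(0)$ must therefore be defined as this limit (the supremum of $g$ on $(0,1]$) for the statement to hold on the closed interval; setting $g(0)=f(0)$ by "direct substitution" would actually violate monotonicity, e.g.\ for $f(t)=\frac12|t-1|$ where $g(x)=1-x$ on $(0,1]$ but $f(0)=\frac12$.
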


\begin{proof}
  It suffices to show that the function \( x \mapsto x \left( f\left(\frac{1}{x}\right) - f(0) \right) \) is decreasing for \( x \in [0,1] \).
  
  Notice that for any given \( x \in [0,1] \), the expression \( x \left( f\left( \frac{1}{x} \right) - f(0) \right) \) represents the slope of the line between the points \( (0, f(0)) \) and \( \left( \frac{1}{x}, f\left( \frac{1}{x} \right) \right) \) on the graph of \( f \). Since \( f \) is convex, the secant slope between these points is non-decreasing as \( \frac{1}{x} \) increases (which occurs when \( x \) decreases).

  Therefore, the function \( x \mapsto x \left( f\left(\frac{1}{x}\right) - f(0) \right) \) is decreasing on \( [0,1] \), implying that \( g(x) \) is non-increasing for \( x \in [0,1] \).
\end{proof}


\begin{lemma}[Upper Bound on the Diagonal]\label{lemma_UB_diagonal}
   Let $q \in \Delta(\X)$ and $q_{\min} = \min_{x \in \X} q(x)$. Given an $\eps$-LDP mechanism $\sK$ such that $q \sK = q$, we have that 
   \begin{equation}
       \sK_{ii} \leq \ri.
   \end{equation}
   and hence
    \begin{equation}
       \min_{i} \sK_{ii} \leq \r.
   \end{equation}
\end{lemma}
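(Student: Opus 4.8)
The goal is to prove that for any $\eps$-LDP mechanism $\sK$ satisfying the invariance constraint $q\sK = q$, each diagonal entry is bounded as $\sK_{ii} \leq \ri$, and consequently the smallest diagonal entry satisfies $\min_i \sK_{ii} \leq \r$.

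The plan is to exploit the two structural constraints simultaneously. The invariance condition $q\sK = q$ gives, reading off the $i$th coordinate, the identity $q_i = \sum_j q_j \sK_{ji} = q_i \sK_{ii} + \sum_{j \neq i} q_j \sK_{ji}$, which rearranges to $q_i(1 - \sK_{ii}) = \sum_{j \neq i} q_j \sK_{ji}$. The $\eps$-LDP constraint \eqref{eq_eps_LDP}, after specializing the input distributions to Dirac masses $\delta_j$ and $\delta_i$ (so that $p\sK(x) = \sK_{jx}$), yields $\sK_{jx} \leq e^\eps \sK_{ix}$ for all rows $j, i$ and all columns $x$. In particular, comparing column $i$ across rows gives $\sK_{ii} \leq e^\eps \sK_{ji}$, equivalently $\sK_{ji} \geq e^{-\eps}\sK_{ii}$ for every $j \neq i$.

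First I would substitute this lower bound on the off-diagonal column entries $\sK_{ji}$ into the rearranged invariance identity. This gives
\begin{equation*}
q_i(1 - \sK_{ii}) = \sum_{j \neq i} q_j \sK_{ji} \geq e^{-\eps}\sK_{ii}\sum_{j \neq i} q_j = e^{-\eps}\sK_{ii}(1 - q_i),
\end{equation*}
using that $q$ is a probability distribution so $\sum_{j\neq i} q_j = 1 - q_i$. Then I would solve this inequality for $\sK_{ii}$: collecting terms gives $q_i \geq \sK_{ii}\big(q_i + e^{-\eps}(1-q_i)\big)$, and multiplying through by $e^\eps$ yields $\sK_{ii} \leq \frac{e^\eps q_i}{e^\eps q_i + 1 - q_i} = \ri$, which is exactly the claimed per-index bound. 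The second claim follows immediately because the function $t \mapsto \frac{e^\eps t}{e^\eps t + 1 - t}$ is increasing in $t$ on $[0,1]$ (its derivative is positive since $e^\eps \geq 1$), so evaluating at the index achieving $q_{\min}$ and using monotonicity gives $\min_i \sK_{ii} \leq \min_i \ri = \r$.

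I do not anticipate a serious obstacle here; the argument is a short chaining of the two linear constraints. The one point requiring a little care is the direction of the LDP inequality one extracts: \eqref{eq_eps_LDP} is a supremum over all input pairs, so to obtain $\sK_{ii} \leq e^\eps \sK_{ji}$ I must correctly identify that feeding in $\delta_i$ as the numerator distribution and $\delta_j$ as the denominator (evaluated at output coordinate $i$) produces the needed comparison, and verify that Dirac inputs are admissible in the supremum (they are, as $\delta_i \in \Delta(\X)$). A secondary subtlety is confirming the monotonicity of $t \mapsto \ri$ for the final step, which is elementary but worth stating explicitly.
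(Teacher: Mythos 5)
Your proof is correct and follows essentially the same route as the paper's: combine the coordinate identity $q_i(1-\sK_{ii})=\sum_{j\neq i}q_j\sK_{ji}$ from the invariance constraint with the LDP bound $\sK_{ji}\geq e^{-\eps}\sK_{ii}$, solve for $\sK_{ii}$, and invoke monotonicity of $t\mapsto \frac{e^\eps t}{e^\eps t+1-t}$ for the second claim. Your extra care in deriving the row-comparison inequality from Dirac inputs to \eqref{eq_eps_LDP} is a sound addition the paper leaves implicit.
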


\begin{proof}
     Given an $\eps$-LDP mechanism $\sK$ such that $q \sK = q$, we have that $\sum_j q_j \sK_{ji} = q_i$  and hence for each element of the diagonal of $\sK$,  $\sK_{ii}$ we have that 
    \[  q_i - q_i \sK_{ii}  =  \sum_{j \neq i} q_j \sK_{ji}.\]
    Additionally, since $\sK$ is $\eps$-LDP, we have that $\sK_{ji} \geq e^{-\eps} \sK_{ii}$. Using this, we have that 
    \[
     q_i - q_i \sK_{ii} \geq  e^{-\eps} \sK_{ii} \sum_{j\neq i}q_j =  e^{-\eps} \sK_{ii} (1 - q_i).
    \] 
    By simple calculations, we have that $\sK_{ii} \left( q_i + e^{-\eps} ( 1  -  q_i) \right)\leq q_i$ which gives us
   \begin{equation}
       \sK_{ii} \leq \ri.
   \end{equation}
   Additionally, since $x \mapsto \frac{e^\eps x}{e^\eps x + 1 - x}$ is an increasing function on $0 < x \leq 1$, we have that 
    \begin{equation}\label{eq_f_div_equivalence_fourth_app}
        \min_{i} \sK_{ii} \leq \frac{e^\eps q_{\min}}{e^\eps q_{\min} + 1 - q_{\min}},
    \end{equation}
    as we wanted to prove.
\end{proof}

\subsection{Proof of \Cref{lemma_equivalence}}
\begin{proof}
 Since $\df$ is jointly convex, the map $p \xrightarrow{} \df(p \| p \sK )$ is convex on $\Delta(\X)$. Hence, $ \sup_{p \in \Delta(\X)} \df (p \| p \sK)$ is attained at one of the extremes of $\Delta(\X)$ which are the Dirac distributions $\delta_i$ with their $i$th element equal to $1$.   Therefore, we have that
 \begin{align}
     \sup_{p \in \Delta(\X)} \df(p \| p \sK )
     &= \max_{i} \df(\delta_i \| \delta_i \sK) \label{eq_lemma1_1}.\\ \label{eq_lemma1_2}
 \end{align}
    By  expanding the definition of $f$-divergence for discrete distributions we have that
    \begin{equation}\label{eq_f_div_equivalence_second}
      D_f(\delta_i \| \delta_i \sK) 
      = \sum_{x \in \X} \delta_i \sK(x) f(\frac{\delta_i (x)}{\delta_i \sK(x)}) 
      = \sK_{ii} f(\frac{1}{\sK_{ii}}) + f(0) \sum_{j \neq i} \sK_{ij}.
    \end{equation}
    Using \Cref{eq_f_div_equivalence_second} and since $\sK$ is row-stochastic we have that
    \begin{equation}\label{eq_f_div_equivalence_third}
    D_f(\delta_i \| \delta_i \sK)  = g(\sK_{ii}) = f(0) + \sK_{ii} (f(\frac{1}{\sK_{ii}}) - f(0)).
    \end{equation}
    where $g(x) = f(0) + x (f(1/x) -f(0))$.
Then, by invoking \eqref{eq_f_div_equivalence_third} and \Cref{lemma_g_decreasing}, we have that 
    \begin{align}
        \sup_{p \in \Delta(\X)} D_f(p \| p \sK) 
        &= \max_{i} g(\sK_{ii}) \\
        &= g(\min_{i} \sK_{ii}) \\ \label{eq_f_div_equal_g_app}
    \end{align}
and we have the result.
\end{proof}

\subsection{Proof of \Cref{proposition_converse}}

\begin{proof}
    
    By \Cref{lemma_equivalence} we have that
    \begin{equation}
        \sup_{p \in \Delta(\X)} \df (p \| p \sK) = g(\min_{i} \sK_{ii})
    \end{equation}
    where $g(x) = f(0) + x (f(1/x) -f(0))$.
    Additionally, due to \Cref{lemma_UB_diagonal},
    \begin{equation}\label{eq_f_div_equivalence_fourth}
        \min_{i} \sK_{ii} \leq \frac{e^\eps q_{\min}}{e^\eps q_{\min} + 1 - q_{\min}}.
    \end{equation}
    Furthermore, since $g$ is a decreaing function on $[0,1]$ (\Cref{lemma_g_decreasing}), we have that 
    \begin{equation}\label{eq_f_div_equivalence_fifth}
        g(\min_{i} \sK_{ii}) \geq g(\frac{e^\eps q_{\min}}{e^\eps q_{\min} + 1 - q_{\min}}).
    \end{equation}
     Then, by invoking \eqref{eq_f_div_equivalence_third}, \Cref{lemma_g_decreasing}, and \eqref{eq_f_div_equivalence_fifth}, we have that 
    \begin{align}
        \sup_{p \in \Delta(\X)} D_f(p \| p \sK) 
        &= g(\min_{i} \sK_{ii}) \\ \label{eq_f_div_equal_g}
        & \geq g(\frac{e^\eps q_{\min}}{e^\eps q_{\min} + 1 - q_{\min}}) \\
        &= \frac{1-q_{\min}}{e^\eps q_{\min} + 1 - q_{\min}} f(0) + \frac{e^\eps q_{\min}}{e^\eps q_{\min} + 1 - q_{\min}} f(\frac{e^\eps q_{\min} + 1 - q_{\min}}{e^\eps q_{\min}}).
    \end{align}
This gives us the lower bound.
\end{proof}

\subsection{Proof of \Cref{lemma_binary}}

\begin{proof}
Define
\begin{equation}
    \sKo =\frac{1}{e^\eps \alpha + 1 - \alpha } \begin{bmatrix}
    e^\eps \alpha      & 1 - \alpha \\
    \alpha      & (e^\eps -1) \alpha + 1 - \alpha
\end{bmatrix} .
\end{equation}

Then by particularizing the lower bound in \Cref{proposition_converse}, we have that 
\begin{equation}
   \optutil \geq 
   \frac{1-\alpha}{e^\eps \alpha + 1 - \alpha} f(0) + \frac{e^\eps \alpha}{e^\eps \alpha + 1 - \alpha} f(\frac{e^\eps \alpha + 1 - \alpha}{e^\eps \alpha}).
\end{equation}
 Define $g(x) = f(0) + x (f(1/x) -f(0))$. Then, due to \eqref{eq_f_div_equal_g}, we have that
\begin{align}
    \sup_{p \in \Delta(\X)} D_f(p \| p \sKo) 
    &=  g(\min_{i} \ \sKp_{ii}) \\
    &= g(\sKp_{11}) \\
    &=    \frac{1-\alpha}{e^\eps \alpha + 1 - \alpha} f(0) + \frac{e^\eps \alpha}{e^\eps \alpha + 1 - \alpha} f(\frac{e^\eps \alpha + 1 - \alpha}{e^\eps \alpha}).
\end{align}
Hence,
\begin{equation}
    \optutil = \sup_{p \in \Delta(\X)} D_f(p \| p \sKo),
\end{equation}
and we have the result.
\end{proof}

\subsection{Proof of \Cref{lemma_K_consistency} }

\begin{proof}
We will prove by induction that the optimal mechanism $\sKo$ is row stochastic, $\eps$-LDP, and satisfies the constraint $q \sKo = q$.
Let $\sK \coloneqq \sK_{q,\eps}$ be the mechanism obtained from \Cref{alg}. In particular, we can write $\sK_{q,\eps}$ as
\begin{equation} \label{eq_matrix_2}
\sK = 
\begin{bmatrix}
  \frac{e^\eps q_1}{e^\eps q_1 + 1 - q_1} & \frac{q_2}{e^\eps q_1 + 1 - q_1} & \dots& \frac{q_n}{e^\eps q_1 + 1 - q_1} \\
  \frac{ q_1}{e^\eps q_1 + 1 - q_1} & &  \\
  \dots & & m\sM& \\
  \frac{ q_1}{e^\eps q_1 + 1 - q_1} &  & 
\end{bmatrix}.
\end{equation}

where $\sM \coloneqq \sK_{\bar{q}, \eps}$ is the optimal mechanism for \Cref{eq_original} with the center point $\bar{q} = \frac{1}{\sum_{i=2}^n q_i} [q_2, \dots, q_n]$ and privacy budget $\eps$. Additionally,  $m = 1 - \frac{ q_1}{e^\eps q_1 + 1 - q_1}$ re-scales the elements of $\sM$ to make sure $\sK$ is row stochastic. Indeed, 
%
%
one can easily confirm that $\sK$ in \eqref{eq_matrix_2} is row-stochastic. This is clear for the first row. For the rest of the rows $i \neq 1$, we have that
\[
\sum_{j=1}^n \sK_{ij} = \frac{q_1}{e^\eps q_1 + 1 - q_1} + m \sum_{j = 1}^{n-1} M_{(i-1)j} = \frac{q_1}{e^\eps q_1 + 1 - q_1} + (1 - \frac{q_1}{e^\eps q_1 + 1 - q_1}) 1 = 1.
\]

Additionally, we can show that $q \sK = q$. For the first column,
\[
\sum_{i=1}^n q_i \sK_{i1} = q_1 \frac{e^\eps q_1 + \sum_{i=2}^n q_i}{ e^\eps q_1 + 1 - q_1} = q_1 \frac{e^\eps q_1 + 1 - q_1}{e^\eps q_1 + 1 - q_1} = q_1.
\]

For the other columns $j \neq 1$, we have that

\begin{equation*}
    \begin{aligned}
        \sum_{i=1}^n q_i \sK_{ij}
        &= \frac{q_1 q_j}{e^\eps q_1 + 1 - q_1} + \sum_{i=2}^{n} q_i m \sM_{(i-1)(j-1)} \\
        &= \frac{q_1 q_j}{e^\eps q_1 + 1 - q_1} + (1 - q_1) m \sum_{i=2}^{n} \frac{q_i}{1-q_1} \sM_{(i-1)(j-1)}  \\
        &= \frac{q_1 q_j}{e^\eps q_1 + 1 - q_1} + (1-q_1) m  \frac{q_j}{1-q_1} \\
        &= \frac{q_1 q_j}{e^\eps q_1 + 1 - q_1} +  ( 1 - \frac{q_1}{e^\eps q_1 + 1 - q_1}) q_j\\
        &= q_j (\frac{q_1}{e^\eps q_1 + 1 - q_1} + 1 - \frac{q_1}{e^\eps q_1 + 1 - q_1} ) \\
        &= q_j.
    \end{aligned}
\end{equation*}

So we need to prove that $\sK$ is $\eps$-LDP. This is done by induction where the basis is the validity of the optimal mechanism for binary priors in \Cref{lemma_binary}.

\textbf{Base case: }
The $\eps$-LDP constraints hold for the first column of $\sK$. For the rest of the columns, we only need to check the $\eps$-LDP constraints for $\sK_{1j}$ for $j \in \{2, \dots, n\}$, since $\sM$ is $\eps$-LDP by induction. In particular, we need to show that 
\begin{equation*}
m e^{-\eps} \max \{ \sM_{1(j-1)},\dots ,\sM_{(n-1)(j-1)} \}  \leq \sK_{1j} \leq m e^{\eps} \min \{ \sM_{1(j-1)},\dots ,\sM_{(n-1)(j-1)} \}    
\end{equation*}
for $j \in \{2, \dots, n\}.$

Without loss of generality, Assume $j =2$, then we should prove

\begin{equation} \label{eq_LDP_constraint_2}
m e^{-\eps} \max \{ \sM_{11},\dots ,\sM_{(n-1)1} \}  \leq \sK_{12} \leq m e^{\eps} \min \{ \sM_{11},\dots ,\sM_{(n-1)1}. \}    
\end{equation}




Notice that for each $i \in \{2, \dots, n-1\}$, we have that 
\begin{equation} \label{eq_LDP_UB_single_2}
\sM_{i1} \leq e^\eps \sM_{\sM_{11}}.    
\end{equation}
Additionally, we have that $\sum_{i = 1}^{n-1} q_{i+1} \sM_{i1} = q_2$. Putting these two together, we have that
\[
\sM_{i1} \geq  \frac{q_2}{q_{i+1} + \sum_{j=2, j \neq i+1}^{n} e^\eps q_j}.
\]
Since q is sorted in an increasing order, we have that
\[
\min\{\sM_{11}, \dots, \sM_{(n-1)1} \} \geq \frac{q_2}{q_2 +  e^\eps \sum_{i=3}^{n}  q_i}.
\]

Hence to show the upper bound constraint in \eqref{eq_LDP_constraint_2}, it suffices to show that 
\begin{equation}\label{eq_LDP_UB_2}
\sK_{12} \leq m e^\eps \frac{q_2}{q_2 +  e^\eps \sum_{i=3}^{n}  q_i}.    
\end{equation}






With Some algebraic manipulation, we can show that \eqref{eq_LDP_UB_2} is equivalent to 
\[
0 \leq e^\eps q_1 (e^\eps - 1) + q_2 (e^\eps - 1),
\]
which is true since $e^\eps \geq 1$.



So we have proven \eqref{eq_LDP_UB_2}. With minimal changes, we can establish
\begin{equation*}
\sK_{1j} \leq m e^{\eps} \min \{ \sM_{1(j-1)},\dots ,\sM_{(n-1)(j-1)} \},    
\end{equation*}
for $j \in \{2, \dots, n\}.$

To prove the lower bound constraint in \eqref{eq_LDP_constraint_2}, we replace \eqref{eq_LDP_UB_single_2} with $\sM_{i1} \geq e^{-\eps} \sM_{11}$ and find that
\[
\max \{ \sM_{11},\dots ,\sM_{(n-1)1} \} \leq \frac{e^\eps q_2}{e^\eps q_2 + \sum_{i=3}^{n} q_i}.
\]

Hence to prove the lower bound constraint in \eqref{eq_LDP_UB_single_2}, we can equivalently prove
\begin{equation}\label{eq_LDP_LB_2}
m e^{-\eps}\frac{e^\eps q_2}{e^\eps q_2 + \sum_{i=3}^{n} q_i} \leq \sK_{12}.
\end{equation}
With some algebraic manipulation, we can show that proving \eqref{eq_LDP_LB_2} is equivalent to proving $q_1 (e^\eps -1) \leq q_2 (e^\eps - 1)$ which is true since q is sorted.

With minimal changes, we can establish

\begin{equation*}
m e^{-\eps} \max \{ \sM_{1(j-1)},\dots ,\sM_{(n-1)(j-1)} \}  \leq \sK_{1j},    
\end{equation*}
for $j \in \{2, \dots, n\}.$
So, we have proven 
\begin{equation*}
m e^{-\eps} \max \{ \sM_{1(j-1)},\dots ,\sM_{(n-1)(j-1)} \}  \leq \sK_{1j} \leq m e^{\eps} \min \{ \sM_{1(j-1)},\dots ,\sM_{(n-1)(j-1)} \},   
\end{equation*}
for $j \in \{2, \dots, n\}.$

Hence the mechanism $\sK$ in \eqref{eq_matrix_2} is $\eps$-LDP with $q \sK = \sK$. 
\end{proof}

\subsection{Proof of \Cref{thrm_optimal_mechanism_part_2}}

\begin{proof}
    Assume that $f(0) < \infty$, since otherwise $\optutil = \infty$ and we are not interested in characterizing the optimal mechanism.
    
    Now, for ease of notation let $\sK \coloneqq \sK_{q,\eps}$. By induction, we first show that $\sK_{11} \leq  \sK_{22} \leq \dots \leq \sK_{nn}$.
    \textbf{Base case.} Notice for $n=2$, $\sK_{11} 
    \leq \sK_{22}$.

    \textbf{Induction Step.} Let $\bar{q} = \frac{1}{\sum_{i=2}^n q_i}[q_2, \dots, q_n]$ and $\sM \coloneqq \sK_{\bar{q}, \eps}$. Now, assume that $\sM_{11} \leq \sM_{22} \leq \dots \leq \sM_{(n-1)(n-1)}$. Define $m = 1 - \frac{q_{\min}}{e^\eps q_{\min} + 1 - q_{\min}}$. Since $\sK_{ii} = m \sM_{(i-1)(i-1)}$ for $i > 1 $, by construction of $\sK$ from \Cref{alg}, we have that $\sK_{22} \leq \sK_{33} \leq \dots \leq \sK_{nn}$. So, all that is left to show is that $\sK_{11} \leq \sK_{22}$.
    So equivalently we want to show that
    \begin{align} \label{eq_K_order}
        \frac{e^{\eps} q_{\min}}{ e^\eps {q_{\min}} + 1 - q_{\min}} - 
        m \frac{e^\eps \frac{ q_2}{1 - q_{\min}}}{e^\eps\frac{ q_2}{1 - q_{\min}} + 1 - \frac{q_2}{1- q_{\min}}} \leq 0.
    \end{align}
    But showing \eqref{eq_K_order} is equivalent to showing:
    \begin{align}
         e^\eps q_{\min} - \frac{e^\eps q_{\min} + 1 - 2 q_{\min}}{e^\eps q_2 + 1 - q_{\min} - q_2} e^\eps q_2 \leq 0
    \end{align}
    which is true since $e^\eps q_{\min} (1- q_{\min}) - e^\eps q_2 (1 - q_{\min}) \leq 0.$
    Hence, we have that 
    \begin{equation}
       \sK_{11} \leq \sK_{22} \leq \dots \leq  \sK_{nn}.
    \end{equation}
Hence, by \Cref{lemma_equivalence} and since $\sKp_{11} = \r$, we have that
\begin{align}
  \df(p, p \sKo) 
  &= f(0) + \sKp_{11} (f(\frac{1}{\sKp_{11}}) -f(0)) \\
  &= \frac{1-q_{\min}}{e^\eps q_{\min} + 1 - q_{\min}} f(0) + \frac{e^\eps q_{\min}}{e^\eps q_{\min} + 1 - q_{\min}} f(\frac{e^\eps q_{\min} + 1 - q_{\min}}{e^\eps q_{\min}}),
\end{align}
as required.
\end{proof}

\subsection{Proof of \Cref{theorem_uniform}}

\begin{proof}

    By induction, we prove that if $q_{\text{u}}$ is the uniform prior, then an optimal mechanism, $\sKou$ , achieving $\Gamma(q_{\text{u}}, \eps)$ is the n-ary randomized response mechanism.

    \textbf{Base Case.} Notice that the mechanism characterizing the optimal mechanism for binary $q_{\text{u}}$ in \Cref{lemma_binary} is the randomized response mechanism.
    
    \textbf{Induction Step.} Notice that since $q_{\text{u}}$ is the uniform distribution then ${q_{\text{u}}}_{\min} = \frac{1}{n}$. Hence we have that 
    \begin{align}
        \sKpu_{11} &= \r = \frac{e^\eps}{e^\eps + n -1}\\
        \intertext{and,}
        \sKpu_{j1} &= \sKpu_{1j} = \frac{1}{e^\eps + n -1}
    \end{align}
    for $j > 1$.

    \begin{figure}[H]
    \centering    
    \[ \sKou = 
    \begin{bmatrix}
      \frac{e^\eps}{e^\eps + n -1} & \begin{matrix} \frac{1}{e^\eps + n -1} & \dots & \frac{1}{e^\eps + n -1} \end{matrix} \\
      \begin{matrix}  \frac{1}{e^\eps + n -1}\\ \vdots \\  \frac{1}{e^\eps + n -1} \end{matrix}  &
      \begin{bmatrix}
      \hspace*{-\arraycolsep}
      \phantom{e^\eps + n} & \phantom{e^\eps + n} & \phantom{e^\eps + n}
      \hspace*{-\arraycolsep}
      \\
      & \raisebox{-0.2\height}[0pt][0pt]{\large$m \sK_{\bar{q}_{\text{u}}, \eps}$} & \\
      & &
      \end{bmatrix}
    \end{bmatrix}
    \]
        \caption{One iteration of the recursion of Algorithm 1.}
        \label{fig_recursion_app}
\end{figure}
So, one-step recursion of \Cref{alg} for $\sKou$ looks like \Cref{fig_recursion_app}. Now assume that $\sK_{\bar{q}_{\text{u}},\eps}$ in \Cref{fig_recursion_app} is the $(n-1)$-ary randomized response, i.e all the elements of its diagonal are $\frac{e^\eps}{e^\eps + n -2}$ and all its non-diagonal elements are $\frac{1}{e^\eps + n -2}$. Multiplying these elements by $m = 1 - \frac{1}{e^\eps + n -1} = \frac{e^\eps + n - 2}{e^\eps + n - 2}$, we have that for $i \in [n-1]$,
\begin{align}
    \sKpu_{ii} &= \frac{e^\eps}{e^\eps + n -1},\\
    \intertext{and}
    \sKpu_{ij} &= \frac{1}{e^\eps + n -1}, & 
\end{align}
$\text{for } i \neq 1 \text{ and } i \in [n-1].$
Hence, $\sKou$ is the $n$-ary randomized reposne. 
Now using \Cref{thrm_optimal_utility}, we have that 
\begin{align}
    \Gamma(q_{\text{u}}, \eps) 
    &= \sup_{p \in \Delta(\X)} \df(p \| p \sKou) \\
    &= \sup_{p \in \Delta(\X)}  \df(p \| p\sK_{\text{RR}}) \\
    &= \frac{n-1}{e^\eps+k-1} f(0)+\frac{e^\eps}{e^\eps+n-1} f\left(\frac{e^\eps+n-1}{e^\eps}\right),
\end{align}
as we wanted to show. 
\end{proof}


\subsection{Linear Optimization for \cite{husain2020local}}\label{appendix_Husain_LP}
Given a public prior $q$ and a distribution with sensitive information $p$, we seek to find:

\begin{equation}
\inf_{\hat{p} \in \mathcal{M}_{\eps,q}} \tv(p, \hat{p}).   
\end{equation}

 This problem can be reformulated as the following linear program:

\begin{align*}
\text{minimize} \quad & \sum_{i=1}^{n} Z_i \\
\text{subject to} \quad & \sum_{i=1}^{n} \hat{p}_i = 1 \\
& \hat{p}_i \leq q_i \cdot e^{\frac{\varepsilon}{2}}, \quad \forall i \\
& \hat{p}_i \cdot e^{\frac{\varepsilon}{2}} \geq q_i, \quad \forall i \\
& Z_i \geq 0, \quad \forall i \\
& Z_i \geq p_i - \hat{p}_i, \quad \forall i \\
& \hat{p}_i \geq 0, \quad \forall i \\
& \hat{p}_i \leq 1, \quad \forall i
\end{align*}

Where $\hat{p}$ is the projected distribution. This linear program is solved using the Python interface of the Gurobi optimization package using their free license.



\subsection{Additional Experiments - Avezu Click Rate Dataset}\label{additional_experiments}

\begin{figure}[H]
    \centering
    \includegraphics[width=0.8\textwidth]{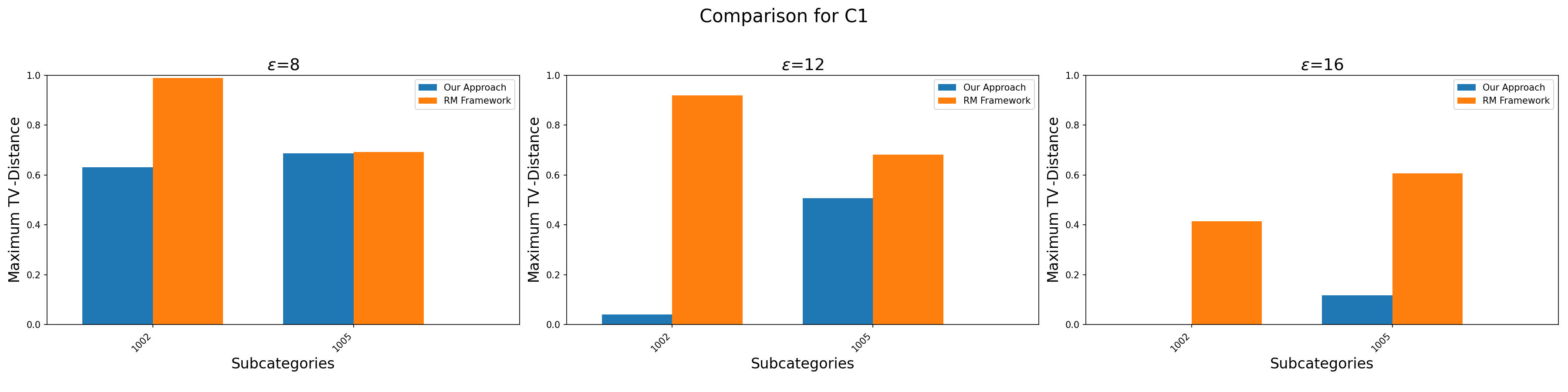}
    \caption{Comparison of our private sampling method with the relative mollifier sampling framework for predicting the next website for the AdTech company to display ads on. The figure reports the maximum $\tv$-distance between users' local distributions and their corresponding sampling distributions. Results are shown for attribute \texttt{C1} and all its subcategories with privacy parameters of $\varepsilon = 8, 12, 16$.}
    \label{fig:C1_plots}
\end{figure}

\begin{figure}[H]
    \centering
    \includegraphics[width=0.8\textwidth]{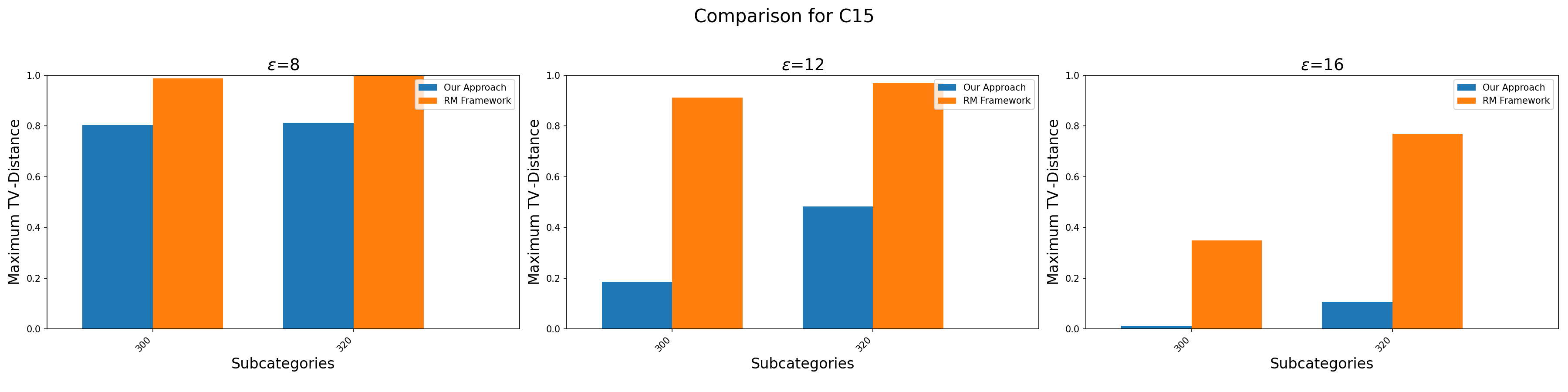}
    \caption{Comparison of our private sampling method with the relative mollifier sampling framework for predicting the next website for the AdTech company to display ads on. The figure reports the maximum $\tv$-distance between users' local distributions and their corresponding sampling distributions. Results are shown for attribute \texttt{C15} and all its subcategories with privacy parameters of $\varepsilon = 8, 12, 16$.}
    \label{fig:C15_plots}
\end{figure}

\begin{figure}[H]
    \centering
    \includegraphics[width=0.8\textwidth]{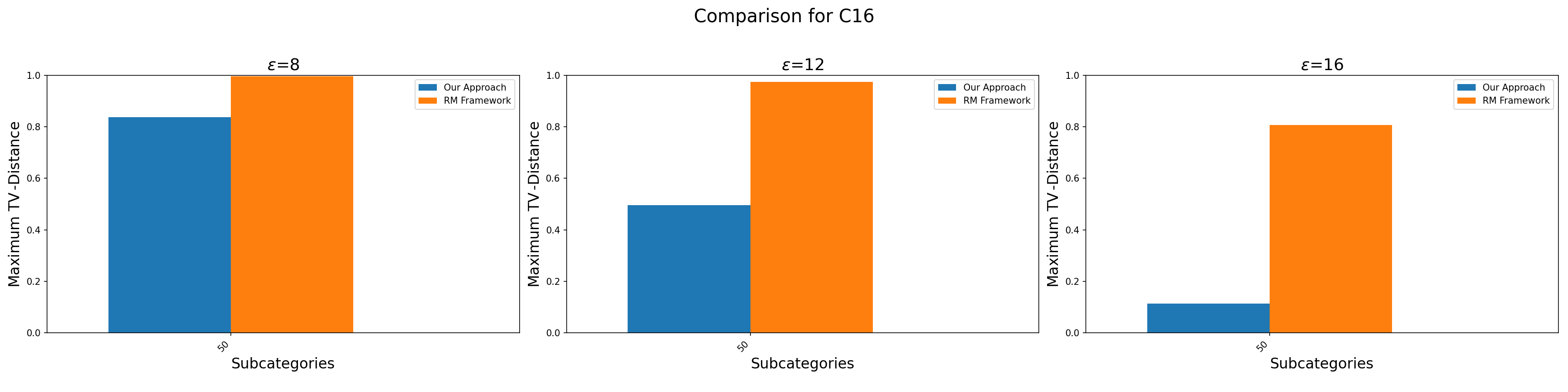}
    \caption{Comparison of our private sampling method with the relative mollifier sampling framework for predicting the next website for the AdTech company to display ads on. The figure reports the maximum $\tv$-distance between users' local distributions and their corresponding sampling distributions. Results are shown for attribute \texttt{C16} and all its subcategories with privacy parameters of $\varepsilon = 8, 12, 16$.}
    \label{fig:C16_plots}
\end{figure}

\begin{figure}[H]
    \centering
    \includegraphics[width=0.8\textwidth]{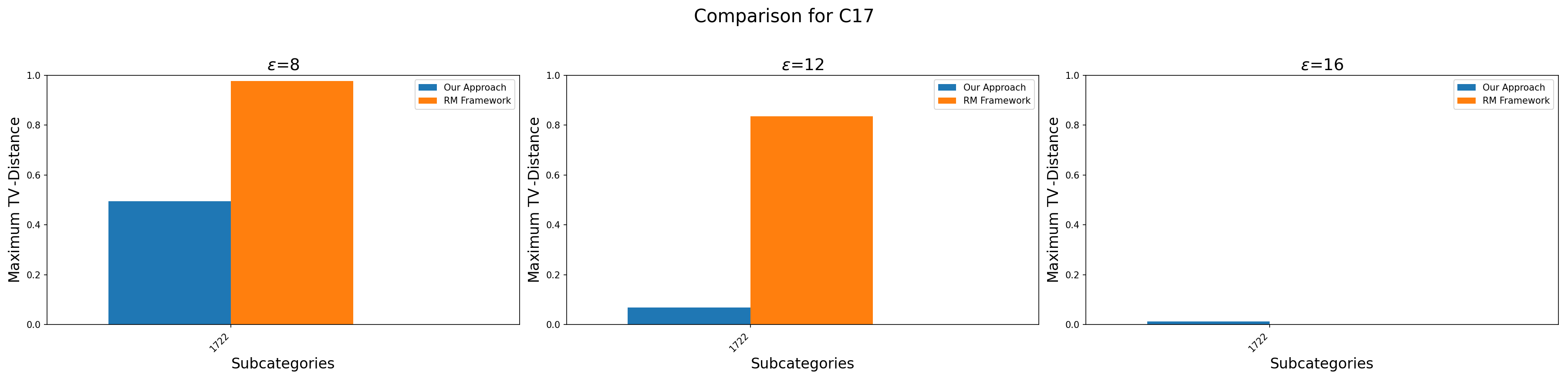}
    \caption{Comparison of our private sampling method with the relative mollifier sampling framework for predicting the next website for the AdTech company to display ads on. The figure reports the maximum $\tv$-distance between users' local distributions and their corresponding sampling distributions. Results are shown for attribute \texttt{C17} and all its subcategories with privacy parameters of $\varepsilon = 8, 12, 16$.}
    \label{fig:C17_plots}
\end{figure}

\begin{figure}[H]
    \centering
    \includegraphics[width=0.8\textwidth]{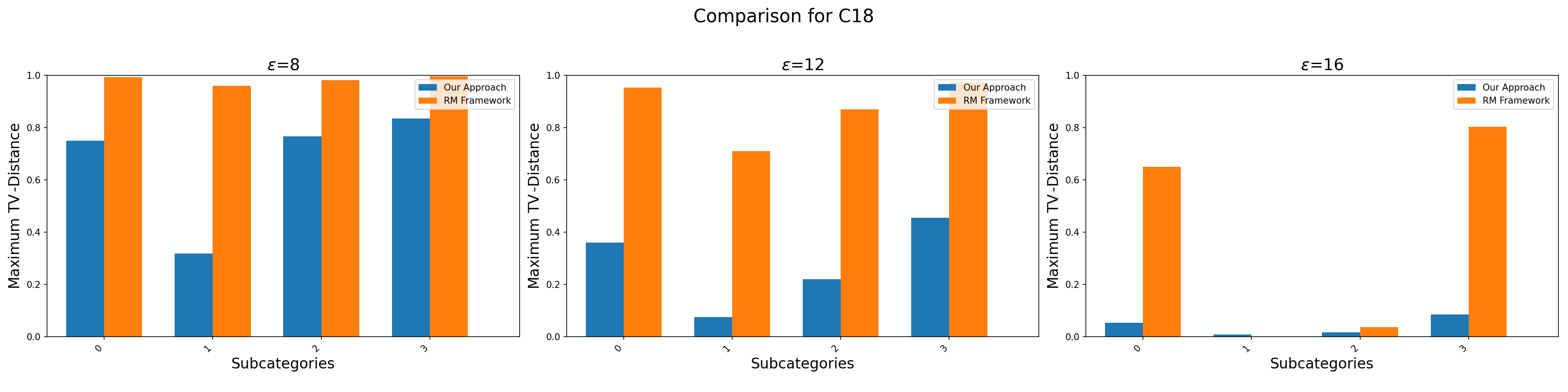}
    \caption{Comparison of our private sampling method with the relative mollifier sampling framework for predicting the next website for the AdTech company to display ads on. The figure reports the maximum $\tv$-distance between users' local distributions and their corresponding sampling distributions. Results are shown for attribute \texttt{C18} and all its subcategories with privacy parameters of $\varepsilon = 8, 12, 16$.}
    \label{fig:C18_plots}
\end{figure}

\begin{figure}[htbp]
    \centering
    \includegraphics[width=0.8\textwidth]{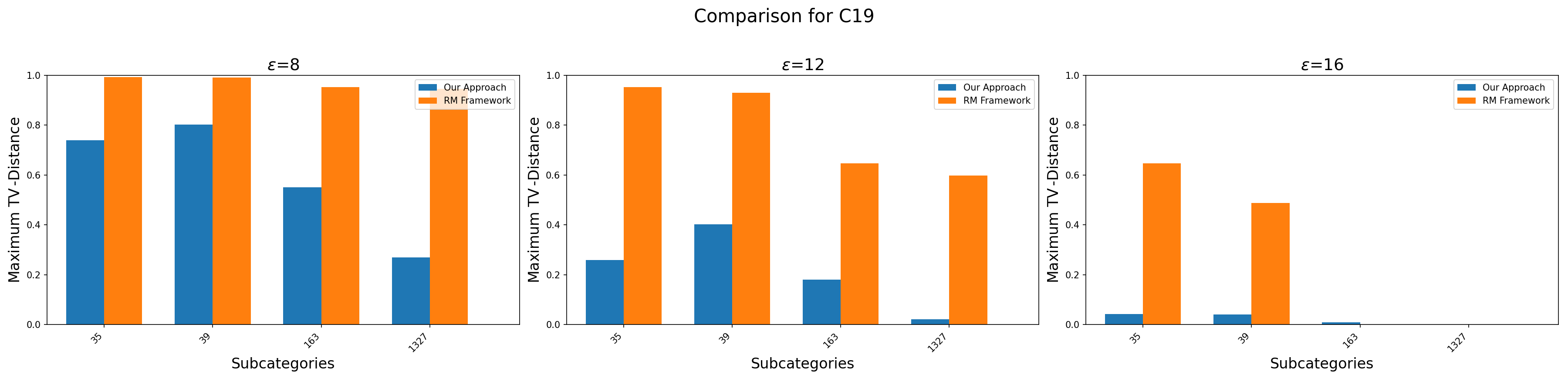}
    \caption{Comparison of our private sampling method with the relative mollifier sampling framework for predicting the next website for the AdTech company to display ads on. The figure reports the maximum $\tv$-distance between users' local distributions and their corresponding sampling distributions. Results are shown for attribute \texttt{C19} and all its subcategories with privacy parameters of $\varepsilon = 8, 12, 16$.}
    \label{fig:C19_plots}
\end{figure}

\subsection{Additional Experiments - MovieLens Dataset}\label{additional_experiments_2}

\begin{figure}[H]
    \centering
    \subfloat{\includegraphics[width=0.4\linewidth]{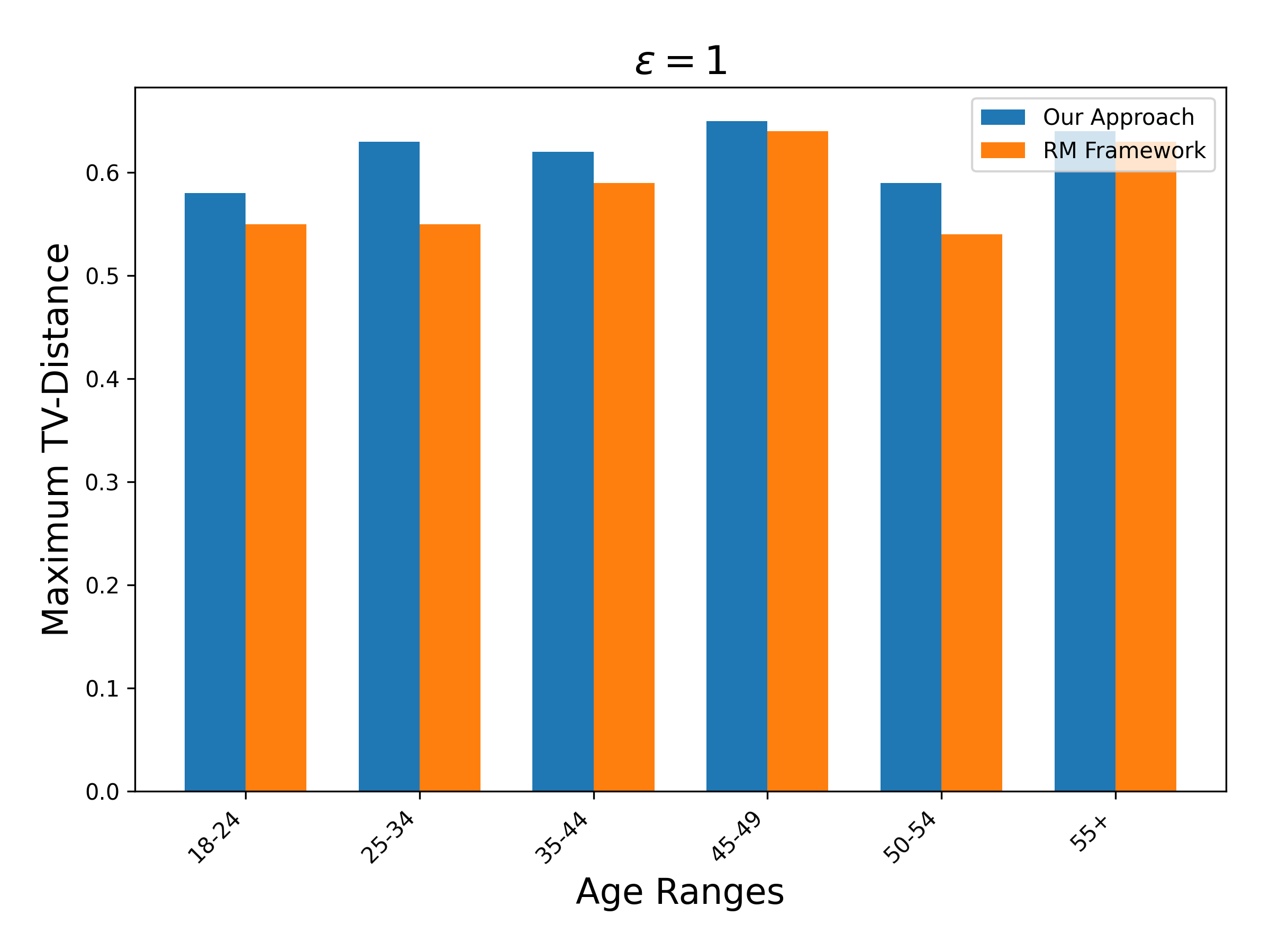}}
    \subfloat{\includegraphics[width=0.4\linewidth]{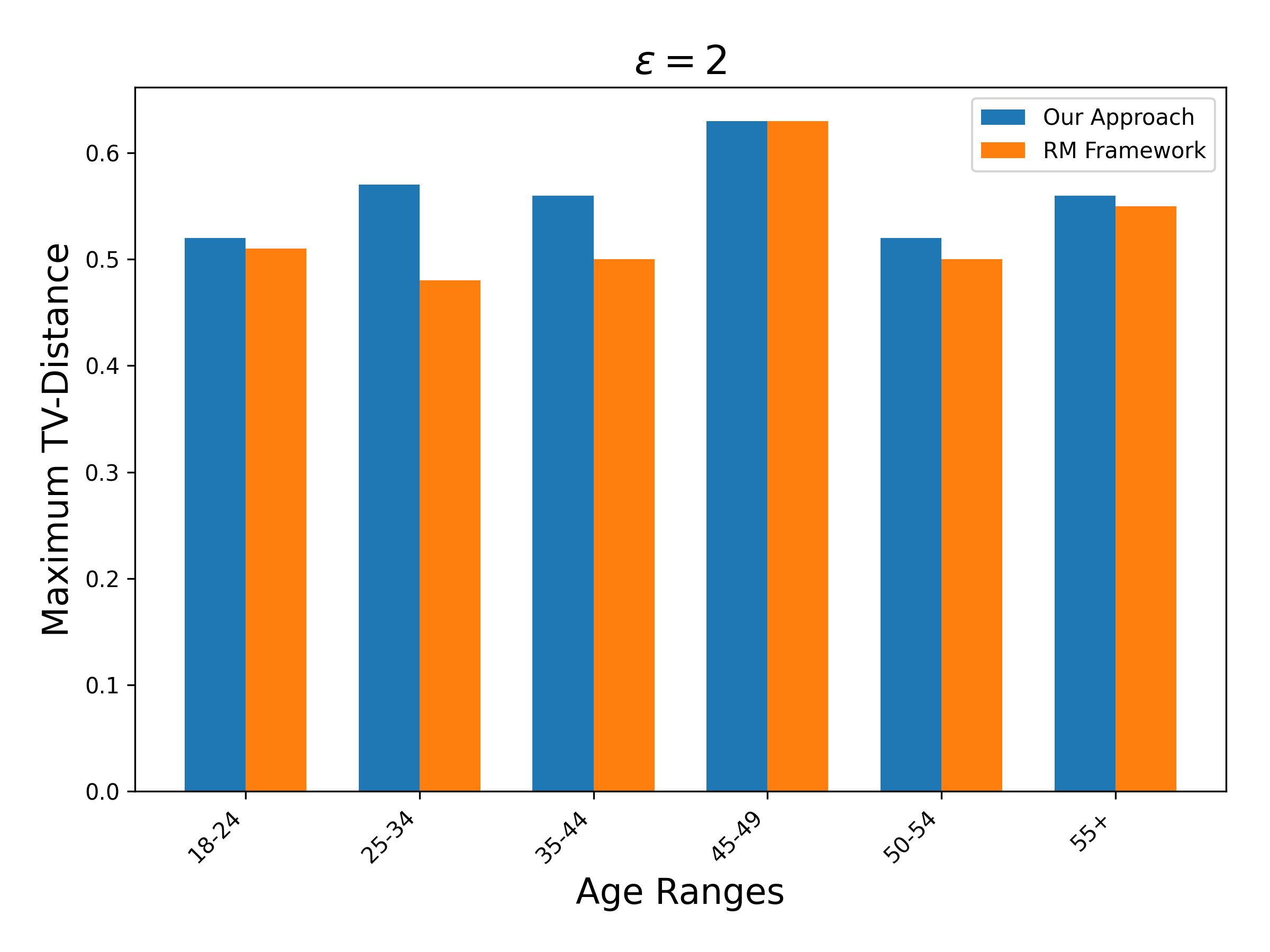}}
    
    \subfloat{\includegraphics[width=0.4\linewidth]{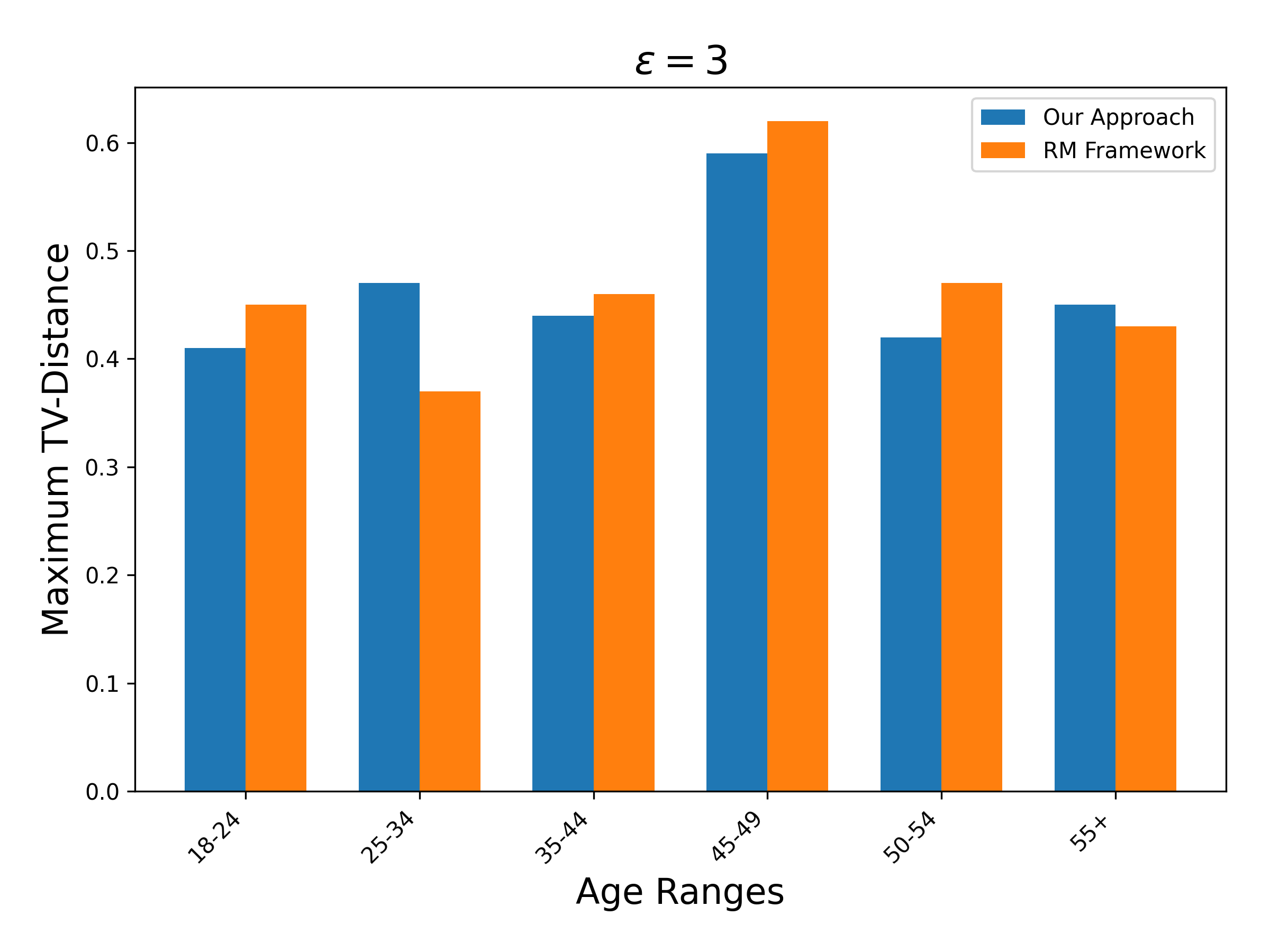}}
    \subfloat{\includegraphics[width=0.4\linewidth]{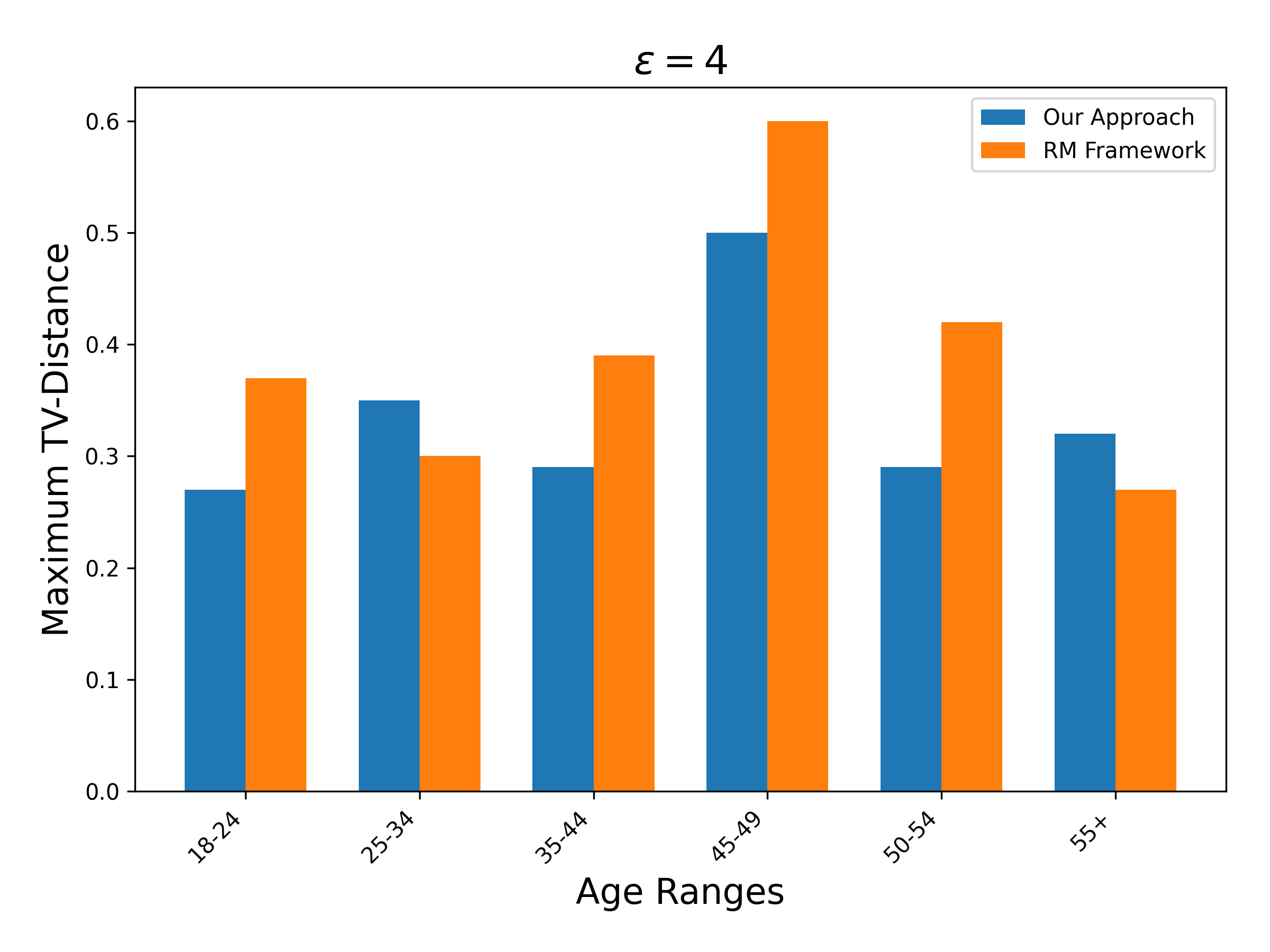}}

    \subfloat{\includegraphics[width=0.4\linewidth]{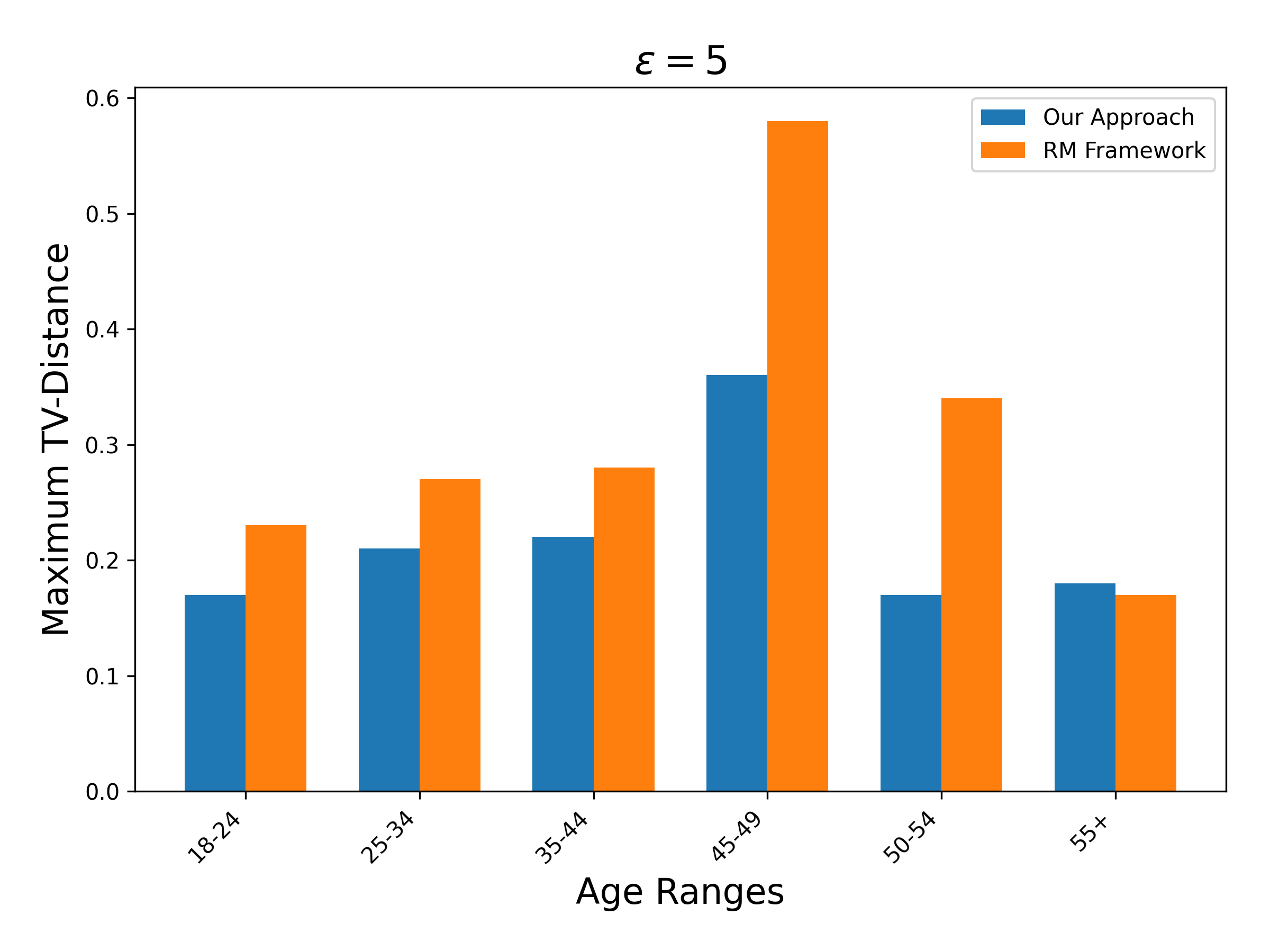}}
    \subfloat{\includegraphics[width=0.4\linewidth]{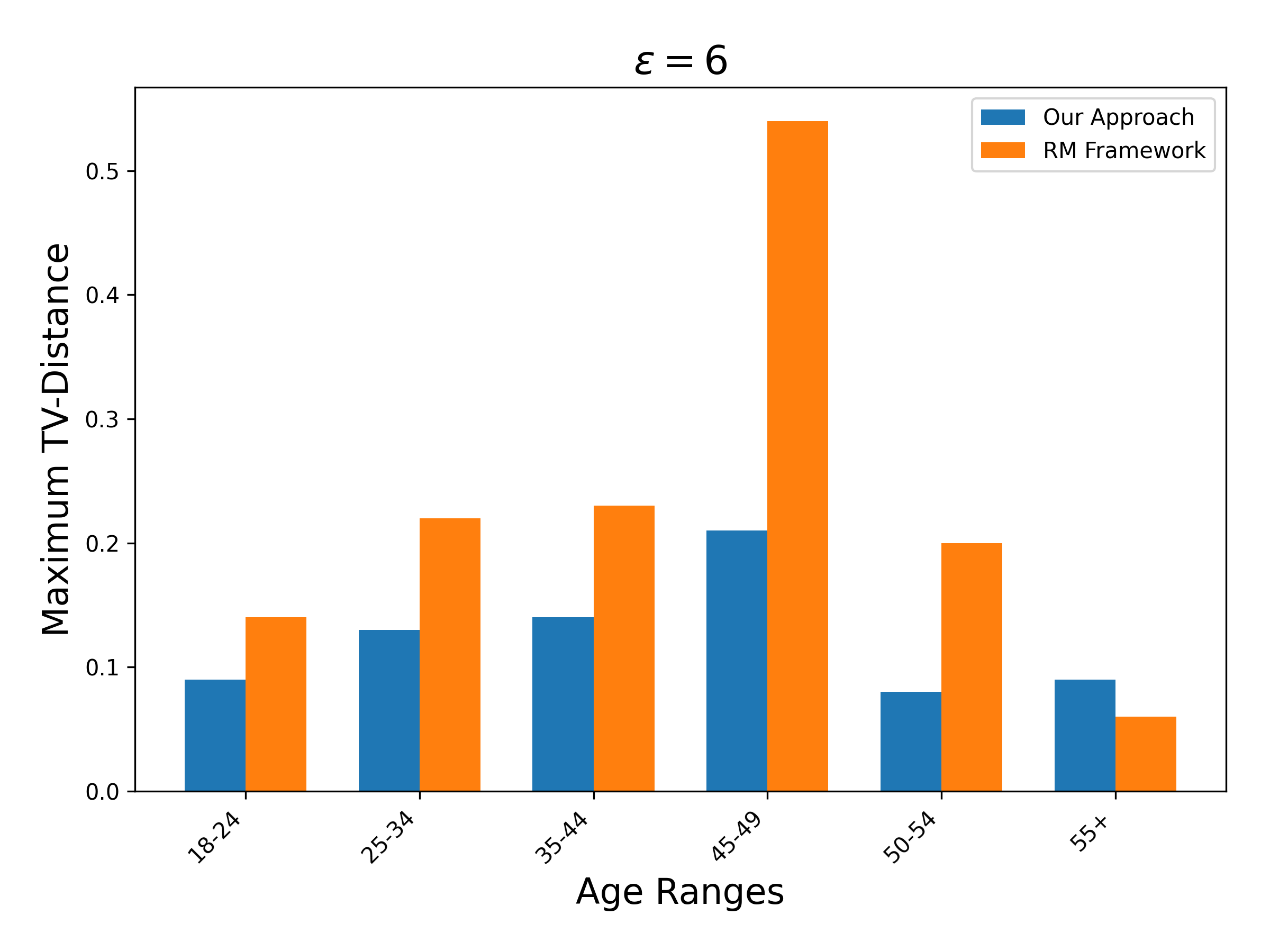}}

    \subfloat{\includegraphics[width=0.4\linewidth]{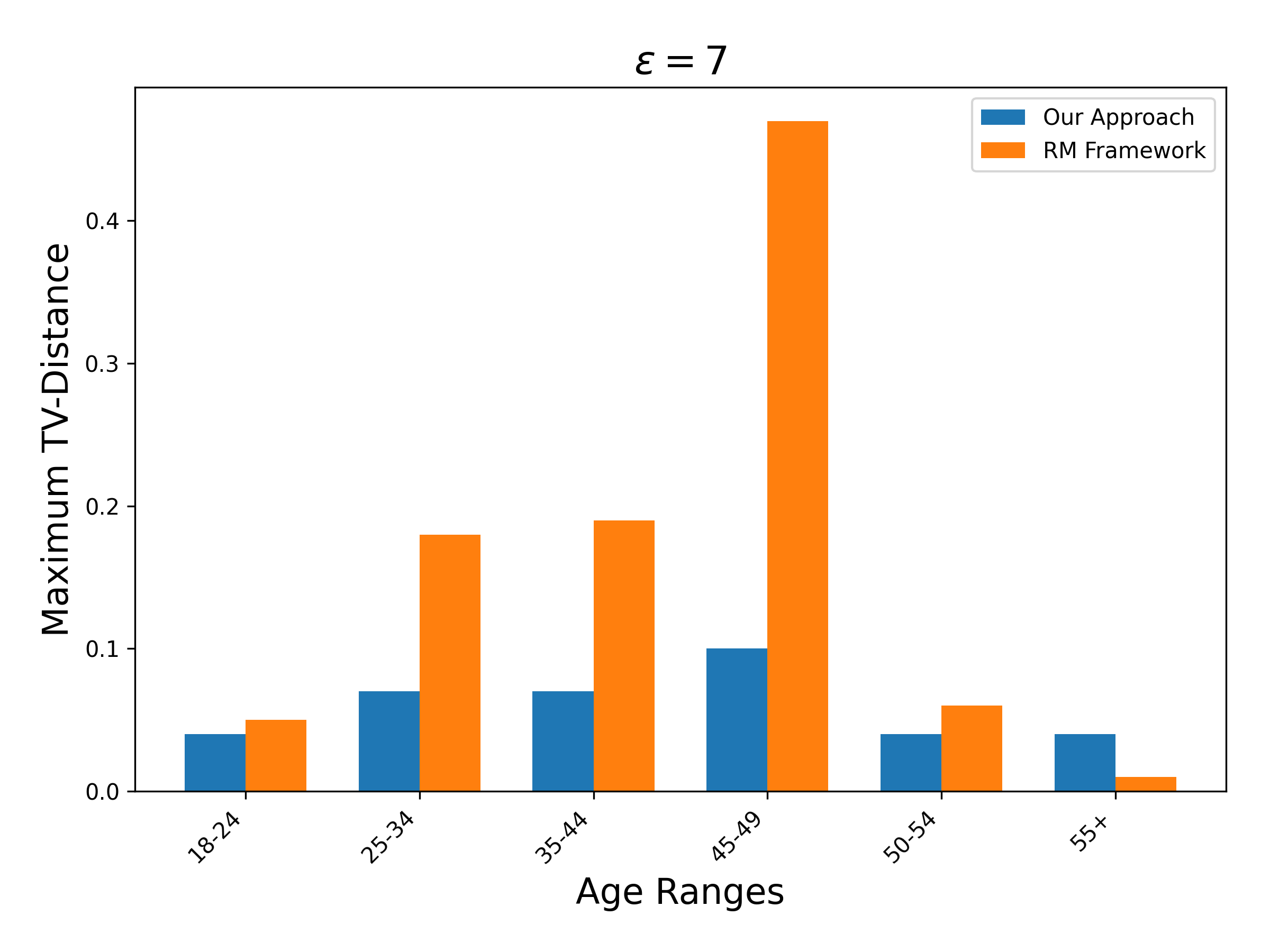}}
    \subfloat{\includegraphics[width=0.4\linewidth]{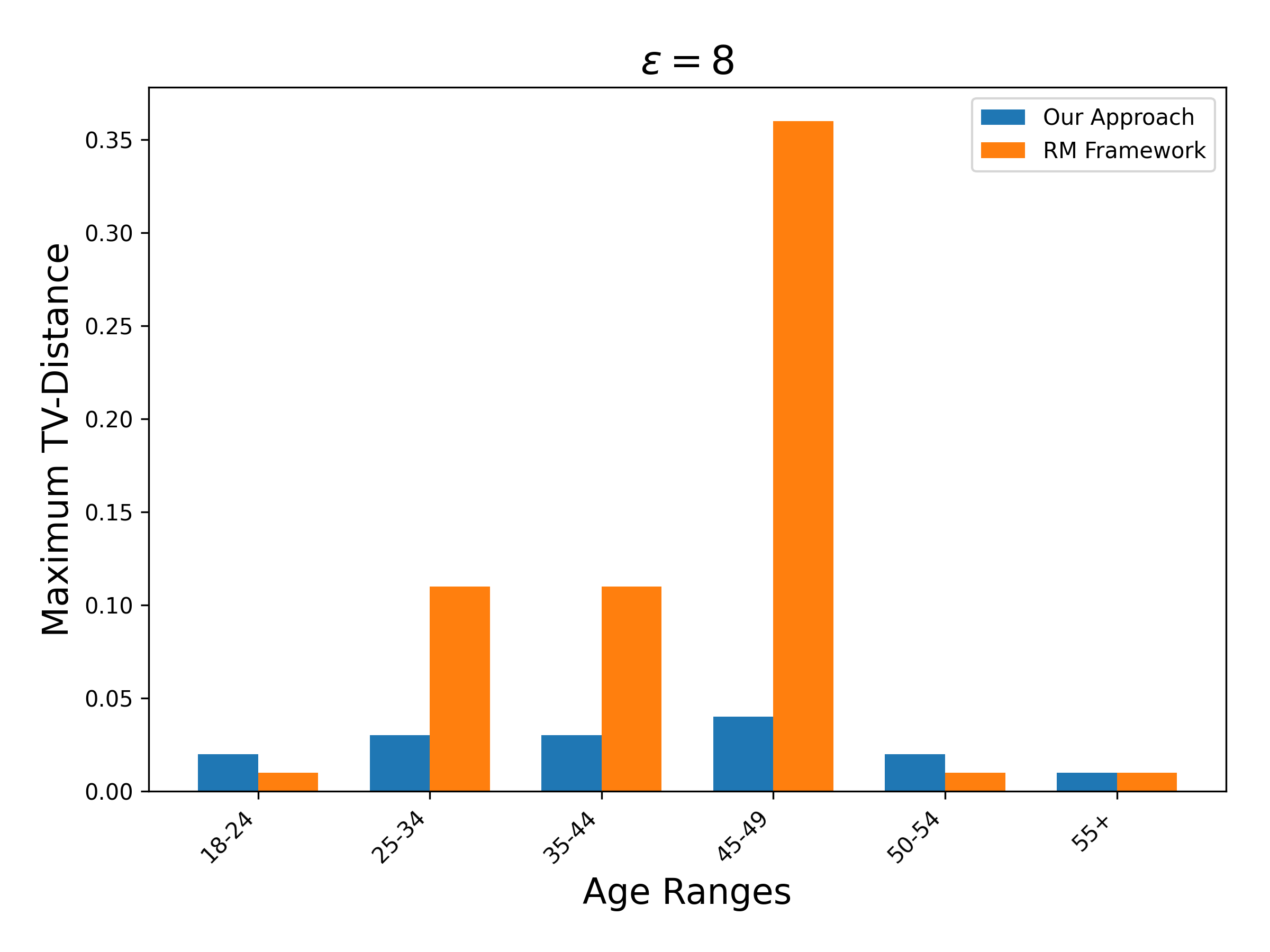}}
    
    \caption{Comparison of our private sampling method with the relative mollifier sampling framework for predicting the genre of the next movie users are likely to watch. The figure reports the maximum $\tv$-distance between users' local distributions and their corresponding sampling distributions across different age groups.}
\end{figure}

\subsection{Computing Infrastructure}\label{ap_infrastructure}

We used Google Colab, which operates on Google Cloud for our computing infrastructure. The CPU is an Intel(R) Xeon(R) CPU @ 2.20GHz with 8 virtual processors and a cache size of 56,320 KB. The system had 53.47 GB of total RAM. The filesystem provided 226GB of storage.

\end{document}